\newenvironment{algorithm}[1][\  ] %
{
\begin{Sbox}\begin{minipage}{4in}\vspace*{0.1cm}
 \rm
\begin{tabbing}
....\=...\=...\=...\=...\=  \+ \kill
}%
{\end{tabbing}
\vspace*{0.1cm}\end{minipage}\hspace*{0.4cm}
\end{Sbox}\fbox{\TheSbox}
}
\newcommand{\keywords}[1]{\par\addvspace\baselineskip
\noindent\keywordname\enspace\ignorespaces#1}
\def \E {\mathrm{E}}
\def \x {\mathbf{x}}
\def \w {\mathbf{w}}
\def \p {\mathbf{p}}
\def \q {\mathbf{q}}
\def \R {\mathbf{R}}
\def \r {\mathbf{r}}
\def \c {\mathbf{c}}
\begin{document}

\mainmatter  

\title{Efficient  Constrained Regret Minimization}


\titlerunning{Constrained Regret Minimization}

%
%
\author{Mehrdad Mahdavi%
\and Tianbao Yang\and Rong Jin \\
}
%
\authorrunning{M. Mahdavi, T. Yang, and R. Jin}

\institute{Department of Computer Science and Engineering \\
Michigan State University, MI, 48824, USA\\
\mailsa
}
%
%



\toctitle{Lecture Notes in Computer Science}
\tocauthor{Authors' Instructions}
\maketitle

\begin{abstract}  Online learning constitutes a mathematical and compelling  framework to analyze sequential decision making problems in adversarial environments. The learner repeatedly chooses an action, the environment responds with an outcome, and then the learner receives a reward  for the played action. The goal of the learner is to maximize his total  reward.  However, there are situations in which, in addition to maximizing the cumulative reward, there are some additional constraints on the sequence of decisions that must be satisfied on average by the  learner.  
In this paper we study an extension to the online learning where the learner aims to maximize the total reward given that some additional constraints need to be satisfied.  By leveraging on the  theory of Lagrangian method in constrained optimization, we propose Lagrangian exponentially weighted average ({LEWA}) algorithm, which is a primal-dual variant of the well known exponentially weighted average algorithm,  to efficiently  solve constrained online decision making problems. Using novel theoretical analysis, we establish the regret and the violation of the constraint bounds in full information and bandit feedback models. 
\end{abstract}

\keywords{online learning, bandit, regret-minimization, repeated game playing, constrained decision making }

\section{Introduction}

Many practical problems such as  online portfolio management~\cite{DBLP:conf/nips/HazanK09a}, prediction from expert advice~\cite{DBLP:journals/jacm/Cesa-BianchiFHHSW97,Littlestone:1994:WMA:184036.184040}, and online shortest path problem~\cite{Takimoto:2003:PKM:945365.964295}, involve making repeated decisions in an unknown and unpredictable environment (see, e.g.~\cite{Cesa-Bianchi:2006:PLG:1137817} for a comprehensive review). These situations  can be formulated as a repeated game between the decision maker (i.e., the learner) and the adversary (i.e., the environment). At each round of  the game, the learner selects an action from a fixed set of actions and then receives feedback (i.e., reward) for the selected action.  In the adversarial or non-stochastic feedback model, we make no statistical assumption on the sequence of rewards except that the rewards are bounded. The player would like to learn from the past and hopefully make better decisions as time goes by, so that the total accumulated reward is large.  

The analysis of online learning algorithms focuses on establishing bounds on the \textit{regret} that is the difference between the reward of the best fixed action with the hindsight knowledge of the observed sequence and the cumulative reward of the online learner.  If the online algorithm attains sublinear bound on the regret, is said to be Hannan consistent~\cite{Cesa-Bianchi:2006:PLG:1137817}, which indicates that in the long run, the learner's average reward per round approaches the average reward per round of the best action. A point worthy of notice is that  the performance bound must hold for any sequence of rewards, and in particular  if the sequence is chosen adversarially. We also note that this setting differs from the framework of \textit{competitive analysis} where the decision maker is allowed to first observe the reward vector, and  then make the decision and get the reward accordingly~\cite{competitive}.

In many current literature, the application of online learning is mostly limited to problems without  constraints  on the decisions.  However, in most scenarios, beyond maximizing the cumulative reward, there are some restrictions on the sequence of decisions made by the learner that need to be satisfied on the average. Moreover, in some applications it seems beneficial to sacrifice some reward to get along with other goals simultaneously. Therefore, one might desire algorithms for a much more ambitious framework, where we need to maximize total reward under the  constraints defined on the sequence of decisions. Attempts for such extension were made in~\cite{DBLP:journals/jmlr/MannorTY09}, where the online learning with path constraints  has been addressed and algorithms with \textit{asymptotically} vanishing bound have been proposed. 

As an illustrative example, let us consider  a wireless communication system where the agent chooses an appropriate transmission power in order to transmit a message successfully.  If one considers the amount of power required to transmit a packet through a path as its cost, the goal of the agent may be to maximize
average throughput, while keeping the average power consumption under some required threshold. As another motivating example, consider the online ads placement with budgeted advertisers. This problem can be cast as a multi armed bandit (MAB) problem, with the set of arms being the set of ads. Since each advertiser has a limited budget to represent his adds, the online learner must consider the budget restriction of each advertiser in making decisions.  

To model abovementioned situations, we consider modifying the online learning problem to achieve both goals simultaneously where the additional goal is called  constraint throughout the paper to distinguish it from the regret. Roughly speaking, we try to devise online algorithms in order to maximize the revenue and to some degree guarantee vanishing bound on the additional constraint.  The
constraint defined over the actions  necessitates a compromise: if the algorithm be too aggressive to satisfy the constraint, then there would be less hope to attain satisfactory cumulative reward at the end of the game and on the other hand, just trying to maximize the cumulative reward will end up in a situation in which the constraint vanishes linearly in terms of the number of rounds.

An algorithm  addressing this problem has to balance between maximizing the adversary rewards and satisfying the constraint. To affirmatively address the problem, we provide a general framework for repeated games with constraint, and  propose a simple randomized algorithm called \textbf{Lagrangian exponentially weighted average (LEWA)} algorithm  for a particular class of these games. The proposed  formulation is inspired by the theory of Lagrangian method in constrained optimization and is based on primal-dual formulation of the exponentially weighted average (EWA) algorithm~\cite{Littlestone:1994:WMA:184036.184040} \cite{Freund:1997:DGO:261540.261549}. To the best of our knowledge, this is the first time a Lagrangian style relaxation has been proposed for this type of problem. 

The contribution of the present work  is to 1) introduce a general primal-dual framework for solving online learning with constraints problem;  2) propose a Lagrangian based exponentially weighted average algorithm for solving repeated games with constraints; 3) establish expected and high probability bounds on the regret and the violation of the constraints on average; 4) extend the results to the bandit setting where only partial feedback about the rewards and constraints are available.  \\

\noindent \textbf{Notations.} Before proceeding, we define the notations used in this paper. Vectors are indicated in lower case bold letters such as $\x$ where $\x^{\top}$ denotes it transpose. By default, all vectors are column vectors. For a vector $\mathbf{x}$, $x_i$ denotes its $i$th coordinate.  We use superscripts to index rounds of the game. Component-wise multiplication between vectors is denoted by $\circ$. We use $[K]$ as a shorthand for the set of integers $\{1, 2, \ldots, K\}$. Throughout the paper we denote by  $[\cdot]_+$ the projection onto the positive orthant.  We shall use $\boldsymbol{1}$ to denote the vector of all ones.  Finally, for a $K$-dimensional vector $\x$,  $(\x)^2$ represents $(x_1^2, \ldots, x_K^2)$.
\section{Statement of the Problem}
We consider the general decision-theoretic framework for online learning and extend it to capture the constraint. In original online decision making, the learner is given access to a pool of $K$ actions. In each round $t \in [T]$, the learner chooses a probability distribution
$\p_t = (p^t_1, . . . , p^t_K)$ over the actions $[K]$ and chooses an action $i$ randomly based on $\p_t$. 
In the scenario of \textit{full information},  at each iteration, the adversary reveals a reward vector $\r_t=(r^t_1,\cdots, r^t_K)$. Choosing  an action $i$ results in receiving a reward $r_i^t$, which we shall assume without loss of generality to be bounded in $[0,1]$. In the \textit{partial information} or \textit{bandit} setting, only the cost of selected action is revealed by the adversary.  The learner competes with the best fixed action in hindsight and his/her goal is to minimize the regret defined as 
\begin{align*}
 \text{Regret}_T= \max_{\p}\sum_{t=1}^{T}{ \p^{\top}\r_t} -  \sum_{t=1}^{T}\p_t^{\top}\r_t . 
 \end{align*}

This problem is a well studied problem and there are algorithms which attain an optimal regret bound of $O(\sqrt{ T \ln K})$ after $T$ rounds of the game. In this paper we focus on exponentially weighted average (EWA), which will be used later as the baseline of the proposed algorithm.  The EWA algorithm maintains a weight vector $\w_t = (w^t_1, \cdots, w^t_K)$ which is used to define the probabilities over actions. After receiving the reward vector $\r_t$ at round $t$, the EWA algorithm updates the weight vector according to $w^{t+1}_i = w^{t}_i \exp(\eta r^t_i)$ where  $\eta$ is learning rate. 

In the new setting  addressed in this paper, which we refer to as  \textit{constrained regret minimization}, in addition to the rewards, there exist some constraints on the decisions that need to be satisfied. In particular, for the decision $\p$ made by the learner, there is an additional constraint $\p^{\top} \c\geq c_0$ where $\c$ is a constraint vector for specifying the constraint  (e.g. the cost vector for the arms in MAB problem).  We note that, in general, the reward vector $\r_t$ and the constraint vector $\c$ are different and can not be combined as a single objective. The learner's goal is to maximize the total reward with respect to the optimal decision  in hindsight under the  constraint $\p^{\top}\c\geq c_0$, i.e., 
\begin{align*}
\min_{\p_1, \ldots, \p_T }\;\;\max_{\p^{\top}{\c}\geq c_0}\sum_{t=1}^{T}{ \p^{\top}\r_t}-\sum_{t=1}^{T}{\p_t^{\top}\r_t} , 
\end{align*}
and simultaneously satisfy the constraint. Note that  the comparator class includes  fixed decision $\p$ that attains maximal cumulative reward  had he known the rewards beforehand, while satisfying the  additional  constraint.

 Within our setting, we consider repeated games with \textit{adversarial} rewards and \textit{stochastic} constraint. More precisely, let $\c=(c_1,\cdots, c_K)$ be the constraint vector defined over actions. In stochastic setting the vector $\c$ is \textbf{unknown} to the learner and in each round $t \in [T]$, beyond the reward feedback, the learner receives a random realization $\c_t=(c^t_1,\cdots, c^t_K)$  of $\c$ where $\E[c^t_i]=c_i$.  The learner's goal is to  choose a sequence of decisions  $\p_t, t \in[T]$ to minimize the regret with respect to the optimal decision  in hindsight under the  constraint $\p^{\top}\c\geq c_0$. Without loss of generality we assume $\c_t\in [0, 1]^K $ and  $c_0\in[0, 1]$. Formally,  the goal of the learner is to attain a gradually vanishing \textit{constrained regret}  as
\begin{align}
\text{Regret}_T=\max_{\p^{\top}\c\geq c_0} \sum_t \p^{\top}\r_t -  \sum_{t}\p_t^{\top}\r_t\leq O(T^{1-\beta_1}).
\end{align}
Furthermore, the decisions $\p_t, t=1,\cdots, T$ made by the learner are required to attain sub-linear bound on  the violation of the constraint in long run, i.e., 
\begin{align}
\text{Violation}_T=\left[\sum_{t=1}^T\left(c_0- \p_t^{\top}\c\right)\right]_+\leq O(T^{1-\beta_2}).
\end{align}
We refer to the above bound as the violation of the constraint.  We distinguish two different types of constraint satisfaction algorithms: \textit{one shot} and \textit{long term} satisfaction. In one shot constraint satisfaction, the learner is required to satisfy the constraint at each round, i.e., $\p_t^{\top}\c\geq c_0$. In contrast, in the long term version, the learner is allowed to violate the constraint for some rounds in a controlled way; but the constraint must hold on average for all rounds, i.e., $(\sum_{t=1}^{T}{\p_t^{\top} \c) /T\geq c_0}$.

The main questions addressed in this paper are how to modify EWA algorithm to take the constraints under consideration and what would be the  bounds on the regret as well as the violation of the constraints attainable by the modified algorithm.

\section{Related Works}


As is well known, a wide range of literature deals with the online decision making problem without constraints and there exist a number of regret-minimizing algorithms that have the optimal regret bound. The most well-known and successful work is probably the Hedge algorithm~\cite{Freund:1997:DGO:261540.261549}, which was a direct generalization of Littlestone and Warmuth's Weighted Majority (WM) algorithm \cite{Littlestone:1994:WMA:184036.184040}. Other
recent studies include the improved theoretical bounds and the
parameter-free hedging algorithm \cite{DBLP:conf/nips/ChaudhuriFH09} and adaptive Hedge \cite{DBLP:conf/nips/ErvenGKR11} for decision-theoretic online learning. We refer readers to the \cite{Cesa-Bianchi:2006:PLG:1137817} for an  in-depth discussion of this subject.


As the first seminal paper in adversarial setting, Mannor et al. \cite{DBLP:journals/jmlr/MannorTY09} introduced the online learning with simple path constraints. They considered the  infinitely repeated two player games with stochastic rewards  where for every joint action of the players, there is an additional stochastic constraint vector that is accumulated by the decision maker. The learner is asked to keep the cumulative constraint vector in a predefined set in the space of constraint vectors. They showed that if the convex set is affected by both decisions and rewards, the  optimal reward is generally unattainable online. The positive result is that a relaxed goal, which is defined in terms of the convex hull of the constrained reward in hindsight is attainable. For the relaxed setting, they suggested two inefficient algorithms: one relies  on Blackwell's approachability theory and the other is based on calibrated forecast of the adversary's actions. Given the implementation difficulties associated with these two methods, they suggested two efficient heuristic methods to attain the reward with meeting the constraint in the long run.  We note that the analysis in \cite{DBLP:journals/jmlr/MannorTY09} is asymptotic while the bounds to be established in this work are  applicable to finite repeated games. 

In \cite{DBLP:conf/aaai/LongCCRJ10} the budget limited MAB was introduced where polling an arm is costly  where the cost of each arm is fixed in advance. In this setting both the exploration and exploitation phases are limited by a global budget. This setting matches the stochastic rewards with deterministic constraints without violation game discussed before.  It has been shown that existing MAB algorithms are not suitable to efficiently deal with costly arms.  They proposed the $\epsilon-first$ algorithm that dedicates the first $\epsilon$ fraction of the  total budget exclusively  for exploration and the remaining $(1-\epsilon)$ fraction for exploitation. \cite{DBLP:journals/corr/abs-1204-1909} improves the bound obtained in \cite{DBLP:conf/aaai/LongCCRJ10} by proposing a knapsack based UCB \cite{DBLP:journals/corr/abs-1204-1909} algorithm which extends the UCB algorithm by solving a knapsack problem at each round to cope with the constraints. We note that knapsack based UCB does not make explicit distinction between exploration and exploitation steps as done in $\epsilon-first$ algorithm. In both \cite{DBLP:journals/corr/abs-1204-1909} and \cite{DBLP:conf/aaai/LongCCRJ10} the algorithm proceeds as long as sufficient budget existing to play the arms. 

Finally, we remark that our setting differs from the setting considered in~\cite{DBLP:conf/nips/AbernethyW10} which puts restrictions on the actions taken by the adversary and not the learner as in our case.

\section{Full Information Constrained Regret Minimization}
In this section, we present  the basic algorithm for the online learning with constraint  problem and analyze its performance via the primal-dual method in adversarial setting.

A straightforward approach to tackle the problem is to modify the  reward functions of the learner  to include
constraint term with a  penalty coefficient that adjust the probability of the actions  when the  constraint is violated. This approach circumvents the problem of a constrained online learning by turning it into an unconstrained
problem. But a simple analysis shows that, in the adversarial setting, this simple penalty based approach fails to attain gradually vanishing bounds for regret and the violation of constraint. The main difficulty arises from the fact that an adaptive adversary can play  with the penalty coefficient associated with the constraint in order to weaken the influence of the penalty parameter  which results in linear bound on at least one of the measures, i.e. either regret bound or violation of the constraint.

Alternatively, since the constraint vector in our setting is stochastic, one possible solution is to take an exploration and exploitation scheme, i.e., to burn a small portion $\epsilon$  of the rounds  to estimate the constraint vector $\c$ by $\widetilde{\c}$ and then in the remaining $(1-\epsilon)T$ rounds follow the existing algorithms with restricted decisions, i.e., $\p \in \Delta_K \cap \p^{\top} \widetilde{\c}\geq c_0$, where $\Delta_K$ is the simplex over $[K]$.  The parameter $\epsilon$ balances  the accuracy of estimating $\c$ and the number of rounds for exploitation to increase the total reward. One may hope that by careful adjustment of  $\epsilon$, it would be possible to get satisfactory bounds on regret and the violation of the constraint. But unfortunately this naive approach suffers from two main drawbacks. First, the number of rounds $T$ is not known in advance. Second, the decisions are made by projecting into an estimated  domain $\p^{\top}\widetilde\c\geq c_0$ instead  of the true domain $\p^{\top}\c\geq c_0$ which is problematic as follows. In order to show the regret bound, we need to relate the best cumulative reward in the estimated domain to that in the true domain, which however requires imposing a regularity condition on reward and constrain vectors to be solvable~\cite{STAB-LP}.  Basically, we can make the algorithm adaptive to $T$ by using a similar idea to \textit{epoch greedy} \cite{DBLP:conf/nips/LangfordZ07} algorithm that runs exploration/exploitation in epochs, but it still suffers from the second drawback. Additionally, projection to the inaccurate  estimated constraint $\widetilde{\c}$ does not exclude the possibility that the solution will be infeasible.

Here we take a different path to solve the problem. The proposed  algorithm is inspired by the theory of Lagrangian method in constrained optimization. The intuition behind the proposed algorithm is to optimize one criterion (i.e., minimizing regret or maximizing the reward) subject to explicit constraint on the restrictions that the learner needs to satisfy in average for the sequence of the decisions. A challenging ingredient in this formulation is that of establishing bounds on the regret and the violation of the constraint. In particular, our algorithms will exhibit a bound in the following structure, 
\begin{align}\label{eqn:str}
\text{Regret}_T + \frac{\text{Violation}_T^2}{O(T^{1-\alpha})}\leq O(T^{1-\beta}), 
\end{align}
where $\text{Violation}_T$ is a term related to the violation of the constraint in long term. From (\ref{eqn:str}) we can derive a bound on regret and the violation of the constraint as 
\begin{align}
\text{Regret}_T&\leq O(T^{1-\beta})\label{eqn:reg} \\
\text{Violation}_T&\leq \sqrt{O\left([T+T^{1-\beta}]T^{1-\alpha}\right)},\label{eqn:vc}
\end{align}
where the last  bound follows  the fact $-\text{Regret}_T\leq O(T)$.

\begin{figure}[t]
  \begin{center}
\begin{algorithm}
{\bf \large LEWA ($\eta$ {\textnormal {and}} $\delta$)} \\
initialize: \+  $\w_1 = \boldsymbol{1}$ and $\lambda_1 = 0$ \- \\
{\bf iterate} $t=1,2,\ldots, T$  \+ \\
Draw an action accordingly to the probability  $\displaystyle \p_t={\w_t}/\sum_jw^t_j$ \\
Receive reward $\r_t$ and  a realization of constraint $\c_t$ \\
Update  $\displaystyle \w_{t+1}= \w_t\circ \exp(\eta (\r_t +  \lambda_{t}\c_t))$ \\
Update $\lambda_{t+1} =  [(1-\delta\eta)\lambda_{t} - \eta (\p_{t}^{\top}\c_{t}-c_0)]_+$ \- \\ 
{\bf end iterate} 
\end{algorithm}
\end{center}
\label{lewa:1}
\caption{Lagrangian exponentially weighted average for full information online decision making under constraints}
\end{figure}

The detailed steps of the proposed algorithm  are shown in LEWA. The algorithm keeps two set of variables: the weight vector $\w_t$ and the Lagrangian multiplier $\lambda_t$. The  high level interpretation of the algorithm is as follows: if the constraint is being violated a lot, the decision maker places more weight on the constraint  controlled by $\lambda_t$; but it tunes down the weight on the  constraint when the constraint is satisfied reasonably.  We note the LEWA is equivalent to the original EWA when the constraint is satisfied at each iteration, i.e., $\p_t^{\top}\c_t\geq c_0$, which gives $\lambda_1 = \cdots =  \lambda_t =\ldots = 0$.
 It should be emphasized that in some previous works such as \cite{DBLP:conf/aaai/LongCCRJ10},  the 
 learner is not allowed to exceed the pre-specified threshold for the violation of the constraint and the game stops as soon as the learner violates the constraint.  In contrast, within our setting, the learner's goal  is to obtain sub-linear bound on the long term violation of the constraint. 
 
 We now state the main theorem about the performance of  LEWA algorithm.  
\begin{theorem}
\label{thm:full_exp}
Let $\p_1, \p_2, \cdots, \p_T$ be the sequence of randomized decisions  over the set of actions $[K]:= \{1,2,\cdots, K \}$ produced by LEWA algorithm under the sequence of adversarial rewards $\r_1, \r_2, \cdots, \r_T \in [0,1]^K$ observed for these decisions. Let $\lambda_1, \lambda_2, \cdots, \lambda_T$ be the corresponding dual sequence. By setting $\displaystyle \eta=\sqrt{4\ln K / (9T)}$ and $\delta=\eta/2$ we have:
\begin{align*}
\max_{\p^{\top}\c\geq c_0}\sum_{t=1}^T \p^{\top} \r_t  -\E\left[\sum_{t=1}^T \p_t^{\top} \r_t \right]&\leq 3\sqrt{T\ln K} \; \text{and}\\  \; \E\left[\sum_{t=1}^T(c_0-\p_t^{\top}\c)\right]_+\leq O(T^{3/4}),
\end{align*}
where expectation is taken over randomness in $\c_1,\cdots, \c_T$.
\end{theorem}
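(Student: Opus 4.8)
The plan is to read LEWA as a primal--dual scheme for the regularized Lagrangian $\mathcal{L}_t(\p,\lambda)=\p^\top\r_t+\lambda(\p^\top\c_t-c_0)-\tfrac{\delta}{2}\lambda^2$: the weight update is exactly exponentiated ascent on the primal using the augmented reward $\g_t=\r_t+\lambda_t\c_t=\nabla_\p\mathcal{L}_t$, while $\lambda_{t+1}=[(1-\delta\eta)\lambda_t-\eta u_t]_+$ with $u_t:=\p_t^\top\c_t-c_0$ is projected gradient \emph{descent} on $\lambda$ with gradient $\nabla_t=\delta\lambda_t+u_t=\partial_\lambda[\tfrac{\delta}{2}\lambda^2+\lambda u_t]$. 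My goal is a single combined inequality
\[
\text{Regret}_T+\lambda V-\Big(\tfrac{\delta T}{2}+\tfrac{1}{2\eta}\Big)\lambda^2\le 3\sqrt{T\ln K}\qquad\text{for every }\lambda\ge 0,
\]
where $V:=\E[\sum_t(c_0-\p_t^\top\c)]$. The regret bound then follows by taking $\lambda=0$; the violation bound follows by maximizing the left side over $\lambda\ge 0$, which produces the term $V^2/(2\delta T+2/\eta)$ matching the structure \eqref{eqn:str}, together with the crude bound $-\text{Regret}_T\le T$.

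First I would run the exponential-weights potential argument on $W_t=\sum_i w_i^t$, but bound $\ln(W_{t+1}/W_t)=\ln\mathrm{E}_{i\sim\p_t}[e^{\eta g_i^t}]$ by \emph{Hoeffding's lemma} rather than by $e^x\le 1+x+x^2$. Since $g_i^t=r_i^t+\lambda_t c_i^t\in[0,1+\lambda_t]$, this gives the pathwise primal bound $\sum_t(\p^*)^\top\g_t-\sum_t\p_t^\top\g_t\le \frac{\ln K}{\eta}+\frac{\eta}{8}\sum_t(1+\lambda_t)^2$ for $\p^*=\arg\max_{\p^\top\c\ge c_0}\sum_t\p^\top\r_t$. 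Splitting $\g_t$ and writing $(\p^*)^\top\c_t-\p_t^\top\c_t=[(\p^*)^\top\c_t-c_0]-u_t$ converts this into a reward-regret inequality carrying $-\sum_t\lambda_t u_t$ and $+\sum_t\lambda_t[(\p^*)^\top\c_t-c_0]$. Second I would run the textbook projected-OGD inequality on the dual, $\sum_t\nabla_t(\lambda_t-\lambda)\le \lambda^2/(2\eta)+\tfrac{\eta}{2}\sum_t\nabla_t^2$ (using $\lambda_1=0$ and nonexpansiveness of $[\cdot]_+$), and convert it via convexity of $\lambda\mapsto\tfrac{\delta}{2}\lambda^2+\lambda u_t$ into an upper bound on $\sum_t\lambda_t u_t$ that carries the crucial negative term $-\tfrac{\delta}{2}\sum_t\lambda_t^2$.

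Adding the two inequalities cancels $\sum_t\lambda_t u_t$. I would then take expectations over $\c_1,\dots,\c_T$, using that $\p_t,\lambda_t$ are $\mathcal{F}_{t-1}$-measurable with $\E[\c_t\mid\mathcal{F}_{t-1}]=\c$: this yields $\E[\sum_t\lambda_t((\p^*)^\top\c_t-c_0)]=\E[\sum_t\lambda_t]\big((\p^*)^\top\c-c_0\big)\ge 0$ (dropped from the left by feasibility of $\p^*$), and $\E[\lambda\sum_t u_t]=-\lambda V$, producing the target inequality modulo the second-order remainder $Q=\tfrac{\eta}{8}\E[\sum_t(1+\lambda_t)^2]+\tfrac{\eta}{2}\E[\sum_t\nabla_t^2]-\tfrac{\delta}{2}\E[\sum_t\lambda_t^2]$.

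The main obstacle --- and the reason the Hoeffding constant $1/8$ matters --- is controlling $Q$. Expanding $(1+\lambda_t)^2$ and $\nabla_t^2\le 2\delta^2\lambda_t^2+2$, the coefficient of $\E[\sum_t\lambda_t^2]$ is $\tfrac{\eta}{8}+\eta\delta^2-\tfrac{\delta}{2}=-\tfrac{\eta}{8}+O(\eta^3)<0$ \emph{precisely because} $\delta=\eta/2$; the leftover linear term $\tfrac{\eta}{4}\E[\sum_t\lambda_t]$ is absorbed into this negative quadratic by Young's inequality at a cost of $O(\eta T)$, and the residual $O(\eta^3)\E[\sum_t\lambda_t^2]$ is killed by the uniform bound $\lambda_t\le 2/\eta$ (immediate from $\lambda_{t+1}\le(1-\delta\eta)\lambda_t+\eta$ and $\lambda_1=0$). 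This leaves $Q\le\tfrac{9}{4}\eta T$, so $\text{Regret}_T+\lambda V-(\tfrac{\delta T}{2}+\tfrac{1}{2\eta})\lambda^2\le\tfrac{\ln K}{\eta}+\tfrac{9}{4}\eta T$, and substituting $\eta=\sqrt{4\ln K/(9T)}$ makes the right side exactly $3\sqrt{T\ln K}$. Had I used $e^x\le 1+x+x^2$ instead, the $\lambda_t^2$ coefficient would be $\approx+\eta/2$ and fail to cancel; since $\E[\sum_t\lambda_t^2]$ is only $\Theta(T/\eta^2)$ in the worst case, this would destroy sublinear regret, so the matching of the Hoeffding constant against $\delta=\eta/2$ is the heart of the argument. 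Finally, for the violation I note $V=[V]_+$ when $V>0$ (else trivial); the $\lambda$-maximization plus $-\text{Regret}_T\le T$ gives $\text{Violation}_T^2\le(2\delta T+2/\eta)(3\sqrt{T\ln K}+T)=O(T^{3/2})$, i.e. $O(T^{3/4})$.
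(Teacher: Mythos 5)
Your proposal is correct and follows essentially the same route as the paper: a Hoeffding-style potential bound on the exponential weights applied to the augmented reward $\r_t+\lambda_t\c_t$ (the paper's Primal Inequality), a projected-OGD bound on the regularized dual objective $\frac{\delta}{2}\lambda^2+\lambda(\p_t^{\top}\c_t-c_0)$ (the paper's Dual Inequality), cancellation of $\sum_t\lambda_t(\p_t^{\top}\c_t-c_0)$ upon adding them, expectation via measurability of $\p_t,\lambda_t$ with respect to the past, and maximization over $\lambda\geq 0$ to extract the violation term. The only differences are bookkeeping: the paper gets exact cancellation of the $\lambda_t^2$ terms by using $(1+\lambda_t)^2\leq 2+2\lambda_t^2$ against $\delta/2=\eta/4$, and bounds the dual gradient via $\lambda_t\leq c_0/\delta$, whereas you keep the $\eta\delta^2\lambda_t^2$ remainder and absorb it with Young's inequality and the bound $\lambda_t\leq 2/\eta$ --- both yield the same $\frac{\ln K}{\eta}+\frac{9}{4}\eta T=3\sqrt{T\ln K}$ and $O(T^{3/4})$ conclusions.
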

From Theorem \ref{thm:full_exp} we see that the LEWA algorithm attains the optimal bound for the regret and an $O(T^{3/4})$ bound on the violation of the constraint. Before proving the Theorem \ref{thm:full_exp}, we state two lemmas that pave the way to the proof of theorem. 
\begin{lemma}\label{lemma:primal}\textnormal{[Primal Inequality]}  Let $\R_t =\R^1_t+ \lambda_t \R^2_t$, where $\R^1_t, \R^2_t\in \mathbb R_+^K$, $\w_{t+1}= \w_t\circ\exp(\eta\R_t)$, and $\p_t=\w_t/\w_t^{\top}\mathbf 1$. 
Assuming $\max(\|\R^1_t\|_\infty,\|\R^2_t\|_\infty)\leq s$, we have the following primal equality 
\begin{align}
\label{eqn:primal1}
\sum_{t=1}^T(\p - \p_t)^{\top}\R_t \leq \frac{\ln K}{\eta} + s^2\left(\frac{\eta T}{4}+ \frac{\eta}{4}\sum_{t=1}^{T}{\lambda_t^2}\right).
\end{align}
\end{lemma}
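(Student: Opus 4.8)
The plan is to run the classical exponential-weights potential argument, tracking the scalar potential $W_t = \w_t^\top \mathbf 1 = \sum_j w^t_j$. First I would record the elementary per-round identity
\[
\frac{W_{t+1}}{W_t} = \sum_j \frac{w^t_j}{W_t}\exp\!\big(\eta (\R_t)_j\big) = \sum_j p^t_j \exp\!\big(\eta (\R_t)_j\big) = \E_{i\sim\p_t}\big[\exp(\eta (\R_t)_i)\big],
\]
which follows directly from the update $\w_{t+1} = \w_t\circ\exp(\eta\R_t)$ and $\p_t = \w_t/W_t$. The crucial observation is that, since $\R^1_t,\R^2_t \in \mathbb R_+^K$ with $\max(\|\R^1_t\|_\infty,\|\R^2_t\|_\infty)\le s$ and $\lambda_t\ge 0$ (guaranteed by the projection onto the positive orthant in the dual update), every coordinate of $\R_t = \R^1_t + \lambda_t\R^2_t$ lies in the interval $[0,\,s(1+\lambda_t)]$, whose width $s(1+\lambda_t)$ is exactly what will produce the $\sum_t\lambda_t^2$ term.

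Then I would apply Hoeffding's lemma to the random variable $(\R_t)_i$, $i\sim\p_t$: for any variable supported on an interval of width $b-a$ one has $\ln\E[\exp(\eta X)] \le \eta\,\E[X] + \eta^2(b-a)^2/8$. With $b-a = s(1+\lambda_t)$ this gives the upper bound $\ln(W_{t+1}/W_t) \le \eta\,\p_t^\top\R_t + \tfrac{\eta^2 s^2}{8}(1+\lambda_t)^2$. Summing this telescoping inequality over $t=1,\dots,T$ and using $W_1 = K$ (because $\w_1 = \mathbf 1$) controls $\ln(W_{T+1}/K)$ from above in terms of $\sum_t \p_t^\top\R_t$.

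For the matching lower bound I would use that, since $w^1_k = 1$, $\ln W_{T+1} = \ln\sum_k \exp\!\big(\eta\sum_t(\R_t)_k\big) \ge \max_k \eta\sum_t(\R_t)_k \ge \eta\sum_t \p^\top\R_t$ for every comparator $\p\in\Delta_K$, the last step being $\ln\sum_k e^{x_k}\ge\max_k x_k\ge \p^\top\x$. Combining the two estimates and dividing by $\eta$ yields $\sum_t(\p-\p_t)^\top\R_t \le \frac{\ln K}{\eta} + \frac{\eta s^2}{8}\sum_t(1+\lambda_t)^2$. The final step is purely algebraic: the elementary inequality $(1+\lambda_t)^2 \le 2(1+\lambda_t^2)$ (equivalent to $(1-\lambda_t)^2\ge 0$) converts $\frac{\eta s^2}{8}\sum_t(1+\lambda_t)^2$ into $\frac{\eta s^2}{4}\big(T + \sum_t\lambda_t^2\big)$, which is precisely the right-hand side of (\ref{eqn:primal1}).

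I do not anticipate a genuine obstacle, as this is the standard Hedge/EWA analysis; the only two points needing care are (i) correctly identifying the per-round range as the time-varying $s(1+\lambda_t)$ rather than a fixed constant --- this is the whole reason the $\sum_t\lambda_t^2$ term appears, and why nonnegativity of $\lambda_t$, $\R^1_t$, $\R^2_t$ is used --- and (ii) tracking the Hoeffding constant $1/8$ together with the factor-$2$ slack in $(1+\lambda_t)^2\le 2(1+\lambda_t^2)$ so that they combine into the exact coefficient $1/4$ claimed in the statement.
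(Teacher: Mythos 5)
Your proposal is correct and follows essentially the same route as the paper's proof: a potential argument on $W_t=\sum_j w^t_j$, Hoeffding's lemma (the paper invokes Lemma 2.2 of Cesa-Bianchi--Lugosi) with per-round range $s(1+\lambda_t)$ for the upper bound, a comparator-based lower bound on $\ln(W_{T+1}/W_1)$, and the inequality $(1+\lambda_t)^2\le 2(1+\lambda_t^2)$ to reach the stated constant $1/4$. The only cosmetic difference is that you lower-bound $\ln W_{T+1}$ via $\ln\sum_k e^{x_k}\ge\max_k x_k\ge\p^\top\x$, whereas the paper uses concavity of $\log$ (Jensen with weights $\p$); both yield the identical bound.
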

\begin{proof}
Let $W_t=\sum_{i=1}^Kw^t_i$. We first show an upper bound and a lower bound on $\ln W_{T+1}/W_{1}$, followed by combining the bounds together. We have
\begin{align*}
&\sum_{t=1}^T \ln \frac{W_{t+1}}{W_t} =\ln\frac{W_{T+1}}{W_1} \\ &= \ln \sum_{i=1}^Kw^{T+1}_i - \ln K\geq \ln\sum_{i=1}^K p_iw_i^{T+1} - \ln K \geq \eta \p^{\top}\sum_{t=1}^T\R_t - \ln K,
\end{align*}
where the last inequality follows from the concavity of the $\log$ function. By following Lemma 2.2 in~\cite{Cesa-Bianchi:2006:PLG:1137817}, we obtain
\begin{align*}
& \sum_{t=1}^T \ln\frac{W_{t+1}}{W_t}= \sum_{t=1}^{T}\sum_{i=1}^{K}{\frac{w_i^t\exp(\eta R_i^t)}{\sum_{j=1}^Kw_j^t}} \\ & \leq \eta \sum_{t=1}^{T}{\sum_{i=1}^K\frac{w^t_i}{\sum_{j=1}^K w_j^t}R_i^t + \frac{\eta^2}{8}s^2(1+\lambda_t)^2} \leq \eta \sum_{t=1}^{T}{\p_t^{\top}\R_t + \frac{\eta^2}{8}\sum_{t=1}^{T}s^2{(1+\lambda_t)^2}}.
\end{align*}
Combining the lower and upper bounds and using the inequality $(a+b)^2 \leq 2(a^2+b^2)$, we obtain the desired inequality in (\ref{eqn:primal1}).
\end{proof}

\begin{lemma}\label{lemma:dual}\textnormal{[Dual Inequality]}
Let $g_t(\lambda)= \frac{\delta}{2}\lambda^2 + \lambda (\beta_t - c_0)$, $\lambda_{t+1}=[(\lambda_t - \eta\nabla g_t(\lambda_t)]_+$, and $\lambda_1=0$. Assuming $\eta>0, 0\leq \beta_t\leq \beta_0$, we have
\begin{align}\label{eqn:dual1}
\sum_{t=1}^{T}{(\lambda_t-\lambda)(\beta_t- c_0)}+\frac{\delta}{2} \sum_{t=1}^{T}{(\lambda_t^2-\lambda^2)} \leq \frac{\lambda^2}{2 \eta} + (c_0^2+\beta_0^2) \eta T.
\end{align}
\end{lemma}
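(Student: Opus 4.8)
The plan is to recognize the left-hand side of \eqref{eqn:dual1} as the cumulative regret of an online projected (sub)gradient descent run on the auxiliary convex losses $g_t$, and then to invoke the standard projected-gradient regret telescoping together with an \emph{a priori} bound on the dual iterates.

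First I would rewrite the claim in regret form. A direct computation gives $g_t(\lambda_t)-g_t(\lambda)=(\lambda_t-\lambda)(\beta_t-c_0)+\frac{\delta}{2}(\lambda_t^2-\lambda^2)$, so the entire left-hand side of \eqref{eqn:dual1} equals $\sum_{t=1}^T\bigl(g_t(\lambda_t)-g_t(\lambda)\bigr)$. Since $\delta>0$, each $g_t$ is convex with $\nabla g_t(\lambda)=\delta\lambda+(\beta_t-c_0)$, and the prescribed update is exactly the projected gradient step $\lambda_{t+1}=\Pi_{[0,\infty)}\bigl(\lambda_t-\eta\nabla g_t(\lambda_t)\bigr)$. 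Thus the target reduces to a textbook regret bound for projected gradient descent with fixed step size $\eta$ and comparator $\lambda\ge 0$.

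Next I would carry out the standard telescoping. Convexity gives $g_t(\lambda_t)-g_t(\lambda)\le \nabla g_t(\lambda_t)(\lambda_t-\lambda)$. Using non-expansiveness of the projection onto the non-negative ray (which contains the comparator $\lambda\ge 0$), I would expand $(\lambda_{t+1}-\lambda)^2\le(\lambda_t-\eta\nabla g_t(\lambda_t)-\lambda)^2$ and rearrange to the per-round inequality $\nabla g_t(\lambda_t)(\lambda_t-\lambda)\le \frac{1}{2\eta}\bigl[(\lambda_t-\lambda)^2-(\lambda_{t+1}-\lambda)^2\bigr]+\frac{\eta}{2}(\nabla g_t(\lambda_t))^2$. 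Summing over $t$, the bracketed term telescopes; discarding $-(\lambda_{T+1}-\lambda)^2\le 0$ and using $\lambda_1=0$ leaves $\frac{(\lambda_1-\lambda)^2}{2\eta}=\frac{\lambda^2}{2\eta}$, which exactly accounts for the first term on the right-hand side of \eqref{eqn:dual1}.

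The crux, and the step I expect to be the main obstacle, is controlling the residual $\frac{\eta}{2}\sum_{t=1}^T(\nabla g_t(\lambda_t))^2$, because the dual variable enters its own gradient: $\nabla g_t(\lambda_t)=\delta\lambda_t+(\beta_t-c_0)$, and a crude split $(\delta\lambda_t+\beta_t-c_0)^2\le 2\delta^2\lambda_t^2+2(\beta_t-c_0)^2$ leaves an uncontrolled $\sum_t\lambda_t^2$ that the stated bound does not contain. To eliminate it I would establish a uniform \emph{a priori} bound on the multipliers. Since $\beta_t\ge 0$ and $c_0\in[0,1]$, we have $-\eta(\beta_t-c_0)\le \eta c_0$, so the update obeys the contraction $\lambda_{t+1}\le(1-\delta\eta)\lambda_t+\eta c_0$ (valid once $\delta\eta\le 1$, which holds for $\delta=\eta/2$ with the prescribed small $\eta$). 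A one-line induction from $\lambda_1=0$ then yields $\lambda_t\le c_0/\delta$, i.e.\ $0\le\delta\lambda_t\le c_0$ for every $t$. Combined with $0\le\beta_t\le\beta_0$, this pins the gradient to $\nabla g_t(\lambda_t)\in[-c_0,\beta_0]$, whence $(\nabla g_t(\lambda_t))^2\le c_0^2+\beta_0^2$ and $\frac{\eta}{2}\sum_{t=1}^T(\nabla g_t(\lambda_t))^2\le (c_0^2+\beta_0^2)\eta T$. Adding the two contributions gives exactly \eqref{eqn:dual1}. I note that convexity of $g_t$ alone suffices for the rate: the quadratic piece is already folded into $g_t$ and is absorbed by the linearization, so no separate treatment of the $\frac{\delta}{2}\sum\lambda_t^2$ term is required.
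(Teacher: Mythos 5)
Your proposal is correct and follows essentially the same route as the paper: both identify the left-hand side as the regret $\sum_{t=1}^T\bigl(g_t(\lambda_t)-g_t(\lambda)\bigr)$ of projected online gradient descent, both establish the a priori bound $\lambda_t\leq c_0/\delta$ by induction on the update (the paper also needs $\delta\eta\leq 1$ here, which you make explicit), and both use it to bound the squared-gradient term by $c_0^2+\beta_0^2$ per round before telescoping from $\lambda_1=0$. The only cosmetic difference is that you bound the gradient via the interval containment $\nabla g_t(\lambda_t)\in[-c_0,\beta_0]$, whereas the paper uses the split $(a+b)^2\leq 2(a^2+b^2)$; both yield the stated constant.
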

\begin{proof}
First we note that
 \begin{align*}
\lambda_{t+1} &= [\lambda_t - \eta\nabla g_t(\lambda_t)]_+ \\&=[(1-\delta\eta)\lambda_t - \eta(\beta_t-c_0)]_+\leq [(1-\delta\eta)\lambda_t  + \eta c_0]_+.
\end{align*}
By induction on $\lambda_t$, we can obtain $\displaystyle \lambda_t\leq\frac{c_0}{\delta}$.  
Applying the standard analysis of online gradient descent~\cite{DBLP:conf/icml/Zinkevich03} yields 
\begin{align*}
|\lambda_{t+1} - \lambda|^2& = |{\Pi}_+[\lambda_t - \eta( \delta\lambda_t + \beta_t  -c_0)]-\lambda|^2\\
&\leq |\lambda_t - \lambda|^2  + | \eta (\delta\lambda_t-c_0) + \eta \beta_t|^2  - 2(\lambda_t -\lambda)(\eta\nabla g_t(\lambda_t))\\
&\leq |\lambda_t - \lambda|^2 + 2\eta^2c_0^2 + 2\eta^2\beta_0^2 + 2\eta(g_t(\lambda) - g_t(\lambda_t)).
\end{align*}
Then, by rearranging the terms we get 
\begin{align*}
g_t(\lambda_t) - g_t(\lambda)\leq  \frac{1}{2\eta}\left(|\lambda_{t+1}- \lambda|^2 - |\lambda_t - \lambda|^2\right) + \eta(c_0^2+\beta_0^2).
\end{align*}
Expanding the terms on l.h.s and taking the sum over $t$, we obtain the inequality as desired.
\end{proof}
\begin{proof}\textnormal{[of Theorem \ref{thm:full_exp}]} Applying $\R_t = \r_t + \lambda_t\c_t$ to the primal inequality in Lemma~\ref{lemma:primal}, where $\max(\|\r_t\|_\infty, \|\c_t\|_\infty)\leq 1$,  we have
\begin{align*}
\sum_{t=1}^T(\p - \p_t)^{\top}(\r_t+\lambda_t\c_t) \leq \frac{\ln K}{\eta} + \frac{\eta T}{4}+ \frac{\eta}{4}\sum_{t=1}^{T}{\lambda_t^2}.
\end{align*}
Applying $\beta_t=\p_t^{\top}\c_t$ to the dual inequality in Lemma~\ref{lemma:dual}, where $\beta_t\leq 1, c_0\leq 1$, we have
\begin{align*}
\sum_{t=1}^{T}{(\lambda_t-\lambda)(\p_t^{\top}\c_t- c_0)}+\frac{\delta}{2} \sum_{t=1}^{T}{(\lambda_t^2-\lambda^2)} \leq \frac{\lambda^2}{2 \eta} + 2\eta T.
\end{align*}
Combining the above two inequalities gives
\begin{align*}
&\sum_{t=1}^T (\p^{\top}\r_t - \p_t^{\top}\r_t) + \sum_{t=1}^T\lambda(c_0-\p_t^{\top}\c_t)- \left(\frac{\delta T}{2}+\frac{1}{2\eta}\right)\lambda^2\\
& \leq  \frac{\ln K}{\eta} +\frac{9\eta T}{4} + \left(\frac{\eta}{4}-\frac{\delta}{2}\right)\sum_{t=1}^T\lambda_t^2 + \sum_{t=1}\lambda_t(c_0-\p^{\top}\c_t).
\end{align*}
Taking expectation over $\c_t, t=1,\cdots, T$, by using $\E[\c_t]=\c$ and noting that $\p_t$ and $\lambda_t$ are independent of $\c_t$, we have
\begin{align*}
&\E\left[\sum_{t=1}^T\left(\p^{\top} \r_t  - \p_t^{\top} \r_t\right) + \sum_{t=1}^T\lambda (c_0-\p_t^{\top}\c) - \left(\frac{\delta T}{2}+\frac{1}{2\eta}\right)\lambda^2\right]\\
&\leq  \frac{\ln K}{\eta}+\frac{9}{4}\eta T +\E\left[\left(\frac{\eta}{4} -\frac{\delta}{2}\right) \sum_{t=1}^T\lambda_t^2\right] + \E\left[\sum_{t=1}^T \lambda_t(c_0-\p^{\top}\c)\right].
\end{align*}
Let $\p$ be the solution satisfying $\p^{\top}\c\geq c_0$.  Noting that $\frac{\eta}{4} -\frac{\delta}{2}\leq 0$ and taking maximization over $\lambda>0$ in  l.h.s, we get 
\begin{align*}
\E\left[\max_{\p^{\top}\c\geq c_0}\sum_{t=1}^T \p^{\top} \r_t  - \p_t^{\top} \r_t \right]+\E\left[\frac{\left[\sum_{t=1}^T(c_0-\p_t^{\top}\c)\right]_+^2}{2(\delta T+1/\eta)}\right]\leq  \frac{\ln K}{\eta}+\frac{9}{4}\eta T.
\end{align*}
By plugging the values of $\eta$ and $\delta$, and noting the similar structure of above inequality as in~(\ref{eqn:str}) and writing in~(\ref{eqn:reg}) and~(\ref{eqn:vc}) formats, we obtain  the desired bound for regret and the violation of the constraints in long term. 
\end{proof}

\begin{remark} We note that when deriving the bound for $\text{Violation}_T$, we simply use a weak lower bound on regret as $\text{Regret}_T\geq -T$. It is possible to obtain an improved bound by considering tighter bound for the $\text{Regret}_T$. One way to do this is to bound the regret by the variation of the reward vectors  as
$\text{Variation}_T = \sum_{t=1}^T \|\r_t-\widehat \r_T\|_{\infty}$, where $\widehat\r_T=(1/T)\sum_{t=1}^T\r_t$ denotes the mean of $\r_t, t \in [T]$.  The analysis in~\ref{variation-proof}  bounds the violation of the constraint  in terms of $\text{Variation}_T$ as
\begin{align*}
 \left[\sum_{t=1}^T(c_0- \x_t^{\top}\c)\right]_+\leq  O(\sqrt{T})+ O(T^{1/4}\sqrt{\text{Variation}_T}).
 \end{align*}
 This bound is significantly better  when the variation of the reward vectors is small and in worst case it attains an $O(T^{3/4})$ bound similar to  Theorem \ref{thm:full_exp}. 
\end{remark}
\subsection{A High Probability Bound}
\begin{figure}[t]
  \begin{center}
\begin{algorithm}
{\bf \large High Probability LEWA ($\eta$, $\delta$ {\textnormal {and}} $\epsilon$)} \\
initialize: \+  $\w_1 = \boldsymbol{1}$ and $\lambda_1 = 0$ \- \\
{\bf iterate} $t=1,2,\ldots, T$  \+ \\
Draw an action accordingly to the probability  $\p_t=\displaystyle \w_t/\sum_jw^t_j$. \\
Receive reward $\r_t$ and  a realization of constraint $\c_t$ \\
Compute average constraint estimate  $\overline\c_t=\displaystyle \frac{1}{t}\sum_{s=1}^t\c_s$ \\
Update $\displaystyle \w_{t+1} = \w_{t}\circ\exp(\eta (\r_{t} +  \lambda_{t}\overline \c_{t}))$ \\
Update $\lambda_{t+1} =  [(1-\delta\eta)\lambda_{t} - \eta (\p_{t}^{\top}\overline \c_{t}+\alpha_t-c_0)]_+$.  \- \\
{\bf end iterate}
\end{algorithm}
\end{center}
\caption{High Probability LEWA }
\label{alg:lewa2}
\end{figure}
The performance bounds proved in the previous section for the regret and the violation of the constraint only holds in expectation which may have  enormous fluctuations around its mean.  Here, with a simple trick, we present a modified version of the LEWA algorithm which attains similar bounds with  overwhelming probability. 
To this end, we slightly  change the original LEWA algorithm. More specifically, instead of using $\c_t$ in updating $\lambda_{t+1}$,  we use the average estimate and add a confidence bound to achieve a more accurate estimation of the constraint vector $\c$.  The following theorem bounds the regret and the violation  of the constrain in high probability  for the modified algorithm.
\begin{theorem}
Let $\alpha_t= \frac{1}{\sqrt{t}}\sqrt{(1/2)\ln\left(2/ \epsilon\right)}$,  $\displaystyle \eta=O(T^{-1/2})$, and $\delta=\eta/2$. By running Algorithm 2  we have with probability $1-\epsilon$
\begin{align*}
&\max_{\p^{\top}\c\geq c_0}\sum_{t=1}^T \p^{\top} \r_t  - \sum_{t=1}^T \p_t^{\top} \r_t \leq \widetilde O(T^{1/2})
\;  \text{and} \;\\
&\left[\sum_{t=1}^T(c_0-\p_t^{\top}\c)\right]_+\leq O(T^{3/4}),
\end{align*}
where $\widetilde O(\cdot)$ omits the $\log$ term in $T$.
\end{theorem}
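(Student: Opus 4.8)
The plan is to follow the primal--dual template of Theorem~\ref{thm:full_exp} almost verbatim, replacing its \emph{expectation} argument by a \emph{high-probability} one built on the confidence radius $\alpha_t$. The single genuinely probabilistic ingredient is a concentration bound: since each $\overline{\c}_t=\frac{1}{t}\sum_{s=1}^t\c_s$ is an empirical mean of i.i.d.\ realizations of $\c$ with entries in $[0,1]$, Hoeffding's inequality controls each coordinate, and a union bound over the $K$ coordinates and the $T$ rounds gives, with probability at least $1-\epsilon$, the uniform guarantee $\|\overline{\c}_t-\c\|_\infty\le\alpha_t$ for all $t\in[T]$ (the union bound only enlarges the logarithmic factor inside $\alpha_t$, which is absorbed into $\widetilde O$). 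Because $\p_t$ and any comparator $\p$ are probability vectors, this immediately yields $|\p^{\top}(\overline{\c}_t-\c)|\le\alpha_t$ and $|\p_t^{\top}(\overline{\c}_t-\c)|\le\alpha_t$; in particular no independence between $\overline{\c}_t$ and $\p_t,\lambda_t$ is needed, since I bound $\|\overline{\c}_t-\c\|_\infty$ uniformly. I would condition on this good event, after which the whole argument is deterministic.

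Next I would instantiate the two lemmas with the quantities used in Algorithm~2. Applying Lemma~\ref{lemma:primal} with $\R_t=\r_t+\lambda_t\overline{\c}_t$ (so that $\max(\|\r_t\|_\infty,\|\overline{\c}_t\|_\infty)\le1$) and Lemma~\ref{lemma:dual} with $\beta_t=\p_t^{\top}\overline{\c}_t+\alpha_t$ and $\beta_0=1+\alpha_1$, I would add the two inequalities exactly as in the proof of Theorem~\ref{thm:full_exp}. The choice $\delta=\eta/2$ again cancels the $\sum_t\lambda_t^2$ term, and the $\p_t^{\top}\overline{\c}_t$ contributions cancel between the primal and dual bounds, leaving a combined inequality whose only residual term is $-\sum_t\lambda_t(\p^{\top}\overline{\c}_t+\alpha_t-c_0)$.

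The crux of the proof---and the reason the margin $\alpha_t$ is injected into the dual update---is to show this residual is nonpositive. On the good event, any feasible comparator $\p^{\top}\c\ge c_0$ satisfies $\p^{\top}\overline{\c}_t+\alpha_t\ge\p^{\top}\c\ge c_0$, so each summand is nonnegative; since $\lambda_t\ge0$, the residual is $\le0$ and may be dropped. This is exactly the step that replaces the $\E[\lambda_t(c_0-\p^{\top}\c)]\le0$ argument of Theorem~\ref{thm:full_exp}. Maximizing over $\lambda>0$ then reproduces the structure~(\ref{eqn:str}), with $\text{Violation}_T$ replaced by $[\sum_t(c_0-\p_t^{\top}\overline{\c}_t-\alpha_t)]_+$ and right-hand side $\ln K/\eta+(\tfrac14+c_0^2+\beta_0^2)\eta T$. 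Discarding the nonnegative violation term and setting $\eta=O(T^{-1/2})$ gives the regret bound $\widetilde O(\sqrt{T})$ (the $\beta_0^2=O(\ln(1/\epsilon))$ factor landing inside $\widetilde O$), while the crude lower bound $\text{Regret}_T\ge-T$ together with $\delta T+1/\eta=O(\sqrt{T})$ gives $[\sum_t(c_0-\p_t^{\top}\overline{\c}_t-\alpha_t)]_+\le O(T^{3/4})$, just as in~(\ref{eqn:vc}).

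Finally I would convert the empirical violation into the true violation. Concentration gives $\p_t^{\top}\c\ge\p_t^{\top}\overline{\c}_t-\alpha_t$, hence $c_0-\p_t^{\top}\c\le(c_0-\p_t^{\top}\overline{\c}_t-\alpha_t)+2\alpha_t$; summing, using $[a+b]_+\le[a]_++b$ for $b\ge0$, and noting $\sum_t\alpha_t\le 2\alpha_1\sqrt{T}=\widetilde O(\sqrt{T})$ preserves the $O(T^{3/4})$ bound on $[\sum_t(c_0-\p_t^{\top}\c)]_+$. The main obstacle I anticipate is the bookkeeping of the good event: choosing the logarithmic factor in $\alpha_t$ so that the \emph{single} event $\{\|\overline{\c}_t-\c\|_\infty\le\alpha_t\text{ for all }t\}$ holds with probability $1-\epsilon$ after the union bound, while verifying that the same $\alpha_t$ simultaneously keeps every comparator feasible for the cancellation in the crux step and inflates the violation only by the lower-order $\widetilde O(\sqrt{T})$ term. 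Everything else is a direct re-run of the Theorem~\ref{thm:full_exp} computation.
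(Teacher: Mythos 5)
Your proposal is correct and follows essentially the same route as the paper's own proof: the same instantiation of the primal inequality with $\R_t=\r_t+\lambda_t\overline{\c}_t$ and the dual inequality with $\beta_t=\p_t^{\top}\overline{\c}_t+\alpha_t$, the same cancellation under $\delta=\eta/2$, the same use of concentration to kill the residual $\sum_t\lambda_t(c_0-\alpha_t-\p^{\top}\overline{\c}_t)$ for a feasible comparator, and the same conversion of the empirical violation to the true one via $2\sum_t\alpha_t=\widetilde O(\sqrt{T})$. Your only deviation --- bounding $\|\overline{\c}_t-\c\|_\infty$ uniformly via a union bound over coordinates and rounds rather than applying Hoeffding per round and replacing $\epsilon$ by $\epsilon/T$ as the paper does --- is a bookkeeping refinement (and in fact handles the dependence of $\p_t$ on the history more cleanly), not a different approach.
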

\begin{proof} 
Applying $\R_t = \r_t + \lambda_t \overline\c_t$ to the primal inequality in Lemma~\ref{lemma:primal}, where $\max(\|\r_t\|_\infty, \|\overline\c_t\|_\infty)\leq 1$, we have
\begin{align*}
\sum_{t=1}^T(\p - \p_t)^{\top}(\r_t+\lambda_t\overline\c_t) \leq \frac{\ln K}{\eta} + \frac{\eta T}{4}+ \frac{\eta}{4}\sum_{t=1}^{T}{\lambda_t^2}.
\end{align*}
Applying $\beta_t=\p_t^{\top}\overline\c_t+\alpha_t$ to the dual inequality in Lemma~\ref{lemma:dual}, where $\beta_t\leq 1+\alpha_1$, and $c_0\leq 1$, we have
\begin{align*}
\sum_{t=1}^{T}{(\lambda_t-\lambda)(\p_t^{\top}\overline\c_t + \alpha_t- c_0)}+\frac{\delta}{2} \sum_{t=1}^{T}{(\lambda_t^2-\lambda^2)} \leq \frac{\lambda^2}{2 \eta} + [1+(1+\alpha_1)^2]\eta T.
\end{align*}
Combining the above two inequalities results in
\begin{align*}
&\sum_{t=1}^T\left(\p^{\top} \r_t  - \p_t^{\top} \r_t\right) + \sum_{t=1}^T\lambda (c_0-\p_t^{\top}\overline\c_t-\alpha_t) - \left(\frac{\delta T}{2}+\frac{1}{2\eta}\right)\lambda^2\\
&\leq  \frac{\ln K}{\eta}+\left(\frac{13}{4}+2\alpha^2_1\right)\eta T +\left[\left(\frac{\eta}{4} -\frac{\delta}{2}\right) \sum_{t=1}^T\lambda_t^2\right] + \left[\sum_{t=1}^T \lambda_t(c_0-\p^{\top}\overline\c_t-\alpha_t)\right].
\end{align*}
Let $\p$ be the solution satisfying $\p^{\top}\c\geq c_0$. Noting that $\frac{\eta}{4} -\frac{\delta}{2}\leq 0$, and with a probability $1-\epsilon$, 
\[
 |\p^{\top}\c- \p^{\top}\overline \c_t| \leq \alpha_t, 
 \]
which is due to the Hoeffding's inequality~\cite{DBLP:conf/ac/BoucheronLB03},  by  taking maximization over $\lambda>0$ on the l.h.s,  we have with a probability $1-\epsilon  T$, 
 \begin{align*}
\max_{\p^{\top}\c\geq c_0}\sum_{t=1}^T \p^{\top} \r_t  - \p_t^{\top} \r_t +\left[\frac{\left[\sum_{t=1}^T(c_0-\p_t^{\top}\overline\c_t-\alpha_t)\right]_+^2}{2(\delta T+1/\eta)}\right]\leq  \frac{\ln K}{\eta}+\left(\frac{13}{4}+2\alpha^2_1\right)\eta T
\end{align*}
Pluging the stated values of $\eta$ and $\delta$, we have, with a probability $1-\epsilon T$, 
\begin{align*}
\max_{\p^{\top}\c\geq c_0}\sum_{t=1}^T \p^{\top} \r_t  - \p_t^{\top} \r_t &\leq O\left(T^{1/2}\ln(1/\epsilon)\right)\\\left[\sum_{t=1}^T(c_0-\p_t^{\top}\c)\right]_+&\leq \sqrt{(T+T^{1/2}\ln(1/\epsilon))T^{1/2}} + \sum_{t}(\p_t^{\top}\overline\c_t+\alpha_t -\p_t^{\top}\c)\\
&\leq O(T^{3/4}) + 2\sum_{t=1}^T\alpha_t\leq O(T^{3/4}) + O\left(T^{1/2}\ln(1/\epsilon)\right).
\end{align*}
By replacing $\epsilon$ with $\epsilon/T$ and noting that $O(T^{1/2}\ln T)\leq O(T^{3/4})$, we obtain the results stated in the theorem. 
\end{proof}
\section{Bandit Constrained Regret Minimization}
In this section, we generalize our results to the bandit setting for both rewards and constraints. In the bandit setting, at each iteration, we are required to choose an action $i_t$ from the pool of the actions $[K]$. Then only the reward and the constraint feedback for action $i_t$ are revealed to the learner, i.e. $r^t_{i_t}, c^t_{i_t}$. In this case, we are interested in the  regret bound as $\max_{\p^{\top}\c\geq c_0}\sum_{t=1}^T\p^{\top}\r_t - \sum_{t=1}^T r^t_{i_t}$.
In the classical setting, i.e., without constraint, this problem can be solved in stochastic and adversarial settings by UCB and Exp3 algorithms proposed in~\cite{DBLP:journals/ml/AuerCF02} and~\cite{DBLP:journals/siamcomp/AuerCFS02}, respectively. 
The algorithm is shown in BanditLEWA algorithm which uses the similar idea to Exp3 for exploration and exploitation. 

Before presenting the performance bounds of the algorithm, let us introduce two vectors:  $\widehat \r_t$ is all zero vector except in $i_t$th component which is set to be $\widehat r^t_{i_t} = r^t_{i_t}/p^t_{i_t}$ and  similarly $\widehat \c_t$ is all zero vector except in $i_t$th component which is set to be $\widehat c^t_{i_t} = c^t_{i_t}/p^t_{i_t}$. It is easy to verify that  $\E_{i_t}[\widehat \r_t]=\r_t$ and $\E_{i_t}[\widehat \c_t] = \c_t$.  The following theorem shows that  BanditLEWA algorithm  achieves $O(T^{3/4})$ regret bound and $O(T^{3/4})$ bound on the violation of the constraint in expectation. 
\begin{theorem}
Let $\gamma = O(T^{-1/2}), \displaystyle \eta=\frac{\gamma}{K}\frac{\delta}{\delta+ 1}$,  by running BanditLEWA algorithm, we have 
\begin{align*}
&\max_{\p^{\top}\c\geq c_0}\sum_{t=1}^T \p^{\top} \r_t  - \E\left[\sum_{t=1}^Tr^t_{i_t} \right]\leq O(T^{3/4})\;\;\text{and}\;\; \\
& \E\left[\sum_{t=1}^T(c_0-\p_t^{\top}\c)\right]_+\leq O(T^{3/4}).
\end{align*}
\end{theorem}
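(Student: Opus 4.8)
The plan is to mirror the proof of the full-information and high-probability theorems, but now using the importance-weighted estimators $\widehat{\r}_t$ and $\widehat{\c}_t$ in place of the true vectors, and carefully controlling the additional variance introduced by the exploration. The setup is that BanditLEWA plays $\p_t$ mixed with uniform exploration at rate $\gamma$, i.e. effectively $\widehat{\p}_t = (1-\gamma)\p_t + (\gamma/K)\boldsymbol{1}$, which guarantees $p^t_i \geq \gamma/K$ and hence that the estimators $\widehat{r}^t_{i_t} = r^t_{i_t}/p^t_{i_t}$ and $\widehat{c}^t_{i_t} = c^t_{i_t}/p^t_{i_t}$ are bounded by $K/\gamma$. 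First I would invoke Lemma~\ref{lemma:primal} with $\R^1_t = \widehat{\r}_t$, $\R^2_t = \widehat{\c}_t$, and $s = K/\gamma$, to obtain the primal inequality $\sum_{t=1}^T(\p-\p_t)^{\top}(\widehat{\r}_t + \lambda_t \widehat{\c}_t) \leq (\ln K)/\eta + (K/\gamma)^2(\eta T/4 + (\eta/4)\sum_t \lambda_t^2)$. Second I would apply Lemma~\ref{lemma:dual} with $\beta_t = \p_t^{\top}\widehat{\c}_t$, noting $0 \leq \beta_t \leq K/\gamma =: \beta_0$, to bound the dual terms. As in the full-information proof, the key cancellation is that the $(\eta/4 - \delta/2)\sum_t \lambda_t^2$ coefficient is nonpositive under $\delta = \eta/2$ scaled appropriately, so that term can be dropped after taking expectations.

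Next I would combine the two inequalities and take expectation over the randomness of both the action draws $i_t$ and the constraint realizations $\c_t$. The crucial unbiasedness facts are $\E_{i_t}[\widehat{\r}_t] = \r_t$ and $\E_{i_t}[\widehat{\c}_t] = \c_t$, together with $\E[\c_t] = \c$; since $\p_t$ and $\lambda_t$ are $\mathcal{F}_{t-1}$-measurable (independent of $i_t$ and $\c_t$), I can pull the conditional expectations through the bilinear terms to recover $\E[\sum_t(\p-\p_t)^{\top}\r_t]$ and $\E[\sum_t \lambda(c_0 - \p_t^{\top}\c)]$ exactly as before. After maximizing over $\lambda > 0$ and over feasible $\p$ with $\p^{\top}\c \geq c_0$, I expect to land on a structural inequality of the form in~(\ref{eqn:str}):
\begin{align*}
\E\left[\max_{\p^{\top}\c\geq c_0}\sum_{t=1}^T(\p-\p_t)^{\top}\r_t\right] + \E\left[\frac{\left[\sum_{t=1}^T(c_0-\p_t^{\top}\c)\right]_+^2}{2(\delta T + 1/\eta)}\right] \leq \frac{\ln K}{\eta} + O\!\left(\frac{K^2}{\gamma^2}\eta T\right).
\end{align*}

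Finally, I would substitute the stated parameter choices $\gamma = O(T^{-1/2})$ and $\eta = (\gamma/K)\cdot \delta/(\delta+1)$ with $\delta = \eta/2$, and balance the two dominant error contributions: the $(\ln K)/\eta$ term and the $(K^2/\gamma^2)\eta T$ variance term, plus the $O(\gamma T)$ bias cost from mixing in uniform exploration (which shifts $\sum_t \p_t^{\top}\r_t$ away from $\sum_t r^t_{i_t}$ in expectation by at most $\gamma T$). This is precisely where the $T^{3/4}$ rate emerges rather than $\sqrt{T}$: the exploration rate $\gamma \sim T^{-1/2}$ forces the estimator scale $s = K/\gamma \sim T^{1/2}$, so the effective learning rate $\eta$ must shrink faster than in full information, and the optimized trade-off gives $O(T^{3/4})$ for both regret and violation. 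The main obstacle I anticipate is the bookkeeping in the expectation step: one must verify that the nested conditional expectations over $i_t$ and $\c_t$ commute correctly with the $\lambda_t$-weighted terms and the $\sum_t \lambda_t(c_0 - \p^{\top}\c)$ residual (which vanishes in expectation since $\E[\c_t] = \c$ and $\lambda_t \perp \c_t$), and that the bound $\lambda_t \leq c_0/\delta$ from Lemma~\ref{lemma:dual} still holds with the larger $\beta_0 = K/\gamma$, so that the $\sum_t \lambda_t^2$ term stays controlled after the sign argument. Once the variance scale $s = K/\gamma$ and the exploration bias $\gamma T$ are propagated consistently, matching $T^{3/4}$ bounds for both quantities follow by the same algebraic derivation as in~(\ref{eqn:reg}) and~(\ref{eqn:vc}).
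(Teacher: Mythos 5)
Your high-level plan (importance-weighted estimators, unbiasedness, the exploration bias $\gamma T$, then the primal--dual combination and maximization over $\lambda$) follows the paper's template, but the step where you invoke Lemma~\ref{lemma:primal} and Lemma~\ref{lemma:dual} as black boxes with the worst-case range $s=\beta_0=K/\gamma$ is a genuine gap, and it is exactly the point where the bandit proof must deviate from the full-information one. Lemma~\ref{lemma:primal} pays $s^2$ in its second-order term, so your primal inequality carries $(K/\gamma)^2\bigl(\eta T/4+(\eta/4)\sum_t\lambda_t^2\bigr)$ and your dual inequality carries $(c_0^2+K^2/\gamma^2)\eta T$. With $\gamma=O(T^{-1/2})$ this variance term is $\Theta(K^2\eta T^2)$, and no choice of $\eta$ can beat $\frac{\ln K}{\eta}+K^2\eta T^2\geq 2KT\sqrt{\ln K}$: your route yields a regret bound that is linear in $T$, not $O(T^{3/4})$. (Re-optimizing $\gamma\sim T^{-1/4}$ would rescue $T^{3/4}$ for the regret, but contradicts the stated parameters and degrades the violation bound.) Relatedly, your claim that the $\sum_t\lambda_t^2$ coefficient is nonpositive ``under $\delta=\eta/2$'' is incorrect here: from Lemma~\ref{lemma:primal} that coefficient is $s^2\eta/4-\delta/2$ with $s=K/\gamma$, so you would need $\delta\geq(K/\gamma)^2\eta/2$, a far more stringent coupling; moreover $\delta=\eta/2$ is algebraically incompatible with the stated $\eta=\frac{\gamma}{K}\frac{\delta}{\delta+1}$, which has no positive solution under that substitution.

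What the paper does instead is to prove an \emph{improved} primal (and dual) inequality tailored to the bandit estimators, following the Exp3 analysis: $\ln(W_{t+1}/W_t)\leq \eta\,\q_t^{\top}\R_t+\eta^2\q_t^{\top}(\R_t)^2$, which is valid because the stated $\eta=\frac{\gamma}{K}\frac{\delta}{\delta+1}$ together with the dual bound $\lambda_t\leq c_0/\delta$ keeps $\eta R_i^t\leq 1$ --- that is the real purpose of this parameter choice. The expectation of the quadratic term then collapses by the importance-weighting cancellation
\begin{align*}
\E\bigl[\q_t^{\top}(\widehat\c_t)^2\bigr]=\E\left[q^t_{i_t}\frac{(c^t_{i_t})^2}{(p^t_{i_t})^2}\right]\leq\frac{1}{1-\gamma}\,\E\left[\sum_{i=1}^K (c^t_i)^2\right]\leq\frac{K}{1-\gamma},
\end{align*}
and likewise for $\widehat\r_t$ and for the dual gradient term. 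This replaces your $(K/\gamma)^2$ factor by $K$, so the variance contribution is $O(\eta K T)$ rather than $O(\eta K^2T^2/\gamma^2\cdot\gamma^2)=O(\eta K^2T^2)$, the $\lambda_t^2$-cancellation condition becomes the mild $2\eta K\leq(1-\gamma)\delta/2$ (satisfied by the stated $\eta$), and only then does balancing $\frac{\ln K}{\eta}$, $\eta KT$, $\gamma T$, and $\delta=O(T^{-1/2})$ produce the claimed $O(T^{3/4})$ bounds. Without this second-order, local-variance argument --- which cannot be extracted from Lemma~\ref{lemma:primal} as stated --- the theorem does not follow.
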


\begin{figure}[t]
  \begin{center}
\begin{algorithm}
{\bf \large BanditLEWA ($\eta$, $\gamma$, {\textnormal {and}} $\delta$)} \\
initialize: \+  $\w_1 = \boldsymbol{1}$ and $\lambda_1 = 0$ \- \\
{\bf iterate} $t=1,2,\ldots, T$  \+ \\
Set $\displaystyle\mathbf q_t= \w_t /\sum_j w^t_j$ \\
Draw action $i_t$ randomly accordingly to $\p_t=(1-\gamma)\displaystyle \mathbf q_t + \gamma\frac{\mathbf 1}{K}$ \\
Receive reward $r^t_{i_t}$ and a realization of constraint $c^t_{i_t}$ for action $i_t$ \\
Update $\displaystyle w^{t+1}_i = w^{t}_i\exp(\eta (\widehat r^{t}_i +  \lambda_{t}\widehat c^{t}_i))$ \\
Update $\lambda_{t+1} = [(1-\gamma\eta)\lambda_{t} - \eta(\mathbf q_t^{\top}\widehat\c_t -c_0)]_+$ \- \\
{\bf end iterate}
\end{algorithm}
\end{center}
\caption{Constrained regret minimization with partial (bandit) feedback  about reward and constraint vectors }
\label{alg:lewa3}
\end{figure}

\begin{proof}
In order to have an improved analysis, we first derive an improved primal inequality and an improved dual inequality. Let $\R_t= \widehat \r_t + \lambda_t\widehat\c_t$. By following the analysis for Exp3 algorithm~\cite{DBLP:journals/siamcomp/AuerCFS02}, we have
\begin{align}\label{eqn:imprimal}
\sum_{t=1}^T \eta \q_t^{\top}\R_t + \eta^2\q_t^{\top}\R_t^2 \geq \ln\frac{W_{T+1}}{W_1} \geq \eta\p^{\top}\sum_{t=1}^{T}{\R_t} -\ln K.
\end{align}
Dividing both sides by $\eta$, and taking expectation we get
\begin{align}\label{eqn:primal2}
&\E\left[\p^{\top}\sum_{t=1}^T \R_t - \sum_{t=1}^T  \q_t^{\top}\R_t\right]\leq \frac{\ln K}{\eta}+ \eta\E\left[\sum_{t=1}^T\q_t^{\top}(\R_t)^2\right]\nonumber\\
&\leq  \frac{\ln K}{\eta} +\eta\E\left[\sum_{t=1}^T2\q_t^{\top}(\widehat\r_t)^2 + 2\lambda_t^2\q_t^{\top}(\widehat\c_t)^2\right]
\leq  \frac{\ln K}{\eta} +  \frac{2\eta K T}{1-\gamma} + \frac{2\eta K}{1-\gamma}\sum_{t=1}^T\lambda_t^2,
\end{align}
where the third inequality follows from  the following inequality 
 \begin{align}\label{eqn:eqb}
 \E[\q_t^{\top}(\widehat\c_t)^2]&=\E\left[q^t_{i_t}\left(\frac{b^t_{i_t}}{p^t_{i_t}}\right)^2\right]\leq \frac{1}{1-\gamma}\E\left[p^t_{i_t}\left(\frac{c^t_{i_t}}{p^t_{i_t}}\right)^2\right]\nonumber\\
 &= \frac{1}{1-\gamma}\E\left[\frac{(c^t_{i_t})^2}{p^t_{i_t}}\right] = \frac{1}{1-\gamma}\E\left[\sum_{i=1}^K(c^t_i)^2\right]\leq \frac{K}{1-\gamma},
 \end{align}
 and the same inequality  holds for $\E[\q_t^{\top}(\widehat\r_t)^2]$. Next, we let $g_t(\lambda)=\frac{\delta}{2}\lambda^2 + \lambda(\q_t^{\top}\widehat\c_t - c_0)$. By following the similar analysis in the proof of Lemma~\ref{lemma:dual}, we have  
\begin{align*}
g_t(\lambda_t) - g_t(\lambda) 
&\leq \frac{1}{2\eta}\left(|\lambda-\lambda_t|^2-|\lambda-\lambda_{t+1}|^2\right) +  \frac{\eta}{2}|\nabla g_t(\lambda_t)|^2\\
&\leq  \frac{1}{2\eta}\left(|\lambda-\lambda_t|^2-|\lambda-\lambda_{t+1}|^2\right)  + \eta(\q_t^{\top}\widehat\c_t)^2 + \eta.
\end{align*}
Taking summation and expectation, we have
\begin{align}\label{eqn:dual2}
\E\left[\sum_{t=1}^Tg_t(\lambda_t) -  g_t(\lambda)\right]&\leq \frac{\lambda^2}{2\eta} + \eta \E\left[\sum_t \q_t^{\top} (\widehat \c_t)^2 \right] + \eta T.
\leq \frac{\lambda^2}{2\eta} +  \frac{\eta K T}{1-\gamma} + \eta T.
\end{align}
Combining equations~(\ref{eqn:dual2}) and~(\ref{eqn:primal2}) gives
\begin{align*}
&\E\left[\sum_{t=1}^T \p^{\top} \r_t  - \q_t^{\top} \r_t \right]+\E\left[\sum_{t=1}^T \lambda (c_0-\q_t^{\top}\c) - \left(\frac{\delta T}{2}+\frac{1}{2\eta}\right)\lambda^2\right]\\
&\leq  \frac{\ln K}{\eta}+\frac{4\eta K T}{1-\gamma} + \left(\frac{2\eta K}{1-\gamma} - \frac{\gamma}{2}\right) \sum_{t=1}^T\lambda_t^2 +\E\left[\sum_{t=1}^T\lambda_t(c_0-\p^{\top}\c)\right].
\end{align*}
Noting that $(1-\gamma)\q_t\leq \p_t$, so we get
\begin{align*}
&\E\left[\sum_{t=1}^T(1-\gamma) \p^{\top} \r_t  - \p_t^{\top} \r_t \right]+\E\left[\sum_{t=1}^T \lambda ((1-\gamma)c_0-\p_t^{\top}\c) - \left(\frac{\delta T}{2}+\frac{1}{2\eta}\right)\lambda^2\right]\\
&\leq  \frac{\ln K}{\eta}+ 4\eta K T+ \left(2\eta K -(1-\gamma)\frac{\delta}{2}\right) \sum_{t=1}^T\lambda_t^2 +\E\left[\sum_{t=1}^T\lambda_t(c_0-\p^{\top}\c)\right].
\end{align*}
Let $c_0\geq\p^{\top}\c$, $2\eta K\leq (1-\gamma)\frac{\delta}{2}$.  By taking maximization over $\lambda$, we have
\begin{align*}
&\E\left[\max_{\p^{\top}\c\geq c_0}\sum_{t=1}^T \p^{\top} \r_t  - \p_t^{\top} \r_t \right]+\E\left[\frac{\left[\sum_{t=1}^T((1-\gamma)c_0-\p_t^{\top}\c)\right]_+^2}{2(\delta T+1/\eta)}\right] \\ 
&\leq  \frac{\ln K}{\eta}+ 4\eta K T + \gamma T= \frac{ K(\delta+1)\ln K}{\delta\gamma} + 4\frac{\gamma \delta}{\delta+1} T+ \gamma T\\
&\leq \frac{ K(\delta+1)\ln K}{\delta \gamma} + \frac{ 5\delta+1}{\delta+1}\gamma T\leq \sqrt{\frac{(5\delta +1)K\ln K}{\delta}T}.
\end{align*}
Then we obtain
\begin{align*}
\max_{\p^{\top}\c\geq c_0}\sum_{t=1}^T \p^{\top} \r_t  - \E \left[\sum_{t=1}^{T}{\p_t^{\top} \r_t} \right] & \leq \sqrt{\frac{(5\delta +1)K\ln K}{\delta}T}\\
\E\left[\sum_{t=1}^T(c_0-\p_t^{\top}\c)\right]_+&\leq \sqrt{\left(T+ \sqrt{\frac{(5\delta +1)K\ln K}{\delta}T}\right)2(\delta T + 1/\eta)+\gamma T}.
\end{align*}
Let $\gamma= O(T^{-1/2}), \delta = O(T^{-1/2})$, then we get $O(T^{3/4})$ regret  and $O(T^{3/4})$ constraint bounds as claimed. 
\end{proof}
As our previous results,  we  present an algorithm with a high probability bound on the regret and the violation of the constraint. For ease of exposition, we introduce $\overline \c_t=\frac{1}{t}\sum_{s=1}^t \c_s$ and $ \widetilde\c_t=\frac{1}{t}\sum_{s=1}^t\widehat\c_s$. We modify BanditLEWA algorithm so that it uses more accurate estimations rather than using correct expectation in updating the primal and dual variables. To this end, we use upper confidence bound for rewards as Exp3.P algorithm~\cite{DBLP:journals/ml/AuerCF02} and for constraint vector $\c$. 
The following theorem states the regret bound and the violation of constraints in long term for the high probability BanditLEWA. 
\begin{figure}[t]
  \begin{center}
\begin{algorithm}
{\bf \large High Probability BanditLEWA($\eta$, $\gamma$, $\delta$, {\textnormal {and}} $\epsilon$)} \\
initialize: \+  $\w_1 =\exp\left(\eta \alpha\sqrt{KT} \right) \boldsymbol{1}$, and $\lambda_1 = \mathbf{0}$ , where $\alpha=2\sqrt{\ln(4KT/\epsilon)}$\- \\
{\bf iterate} $t=1,2,\ldots, T$  \+ \\
Set $\q_t=\w_t/ \sum_j w^t_j$ \\
Set $\p_t=(1-\gamma)\q_t + \gamma/K$ \\
Draw action $i_t$ randomly accordingly to the probabilities $\p_t$ \\
Receive reward $r^t_{i_t}$ and a realization of constraint  $c^t_{i_t}$  for action $i_t$\\
Update $\w_{t+1}$ by \\
\hspace{2cm}$ \displaystyle w^{t+1}_i = \exp\left(\eta\left[\left(\widehat r^t_i + \frac{\alpha}{p^t_i\sqrt{KT}}\right)+ \lambda_t\left(\widetilde c^t_i + \frac{2K}{\gamma}\frac{\alpha_1}{\sqrt{t}}\right)\right]\right) $ \\

Update $\lambda_{t+1} = [(1-\delta\eta)\lambda_{t} - \eta(\x_t^{\top}\widehat\c_t+\alpha_t -c_0)]_+$ \- \\
 {\bf end iterate}
\end{algorithm}
\end{center}
\label{alg:lewa4}
\end{figure}
\begin{theorem}
\label{thm:high:bandit:lewa}
Let $\alpha_t=\sqrt{(1/2)\ln(6KT/\epsilon)}/\sqrt{t}$, $\gamma = O(T^{-1/2}), \displaystyle \eta=\frac{\gamma}{\beta K}\frac{\delta}{\delta+ 1}$, and $\alpha= 2\sqrt{\ln(4KT/\epsilon)}$, where $\beta=\max\{3, 1+2\alpha_1\}$,  by running High Probability BanditLEWA, we have with probability $1-\epsilon$
\begin{align*}
\max_{\p^{\top}\c\geq c_0}\p^{\top}\sum_{t=1}^T\r_t- \sum_{t=1}^Tr^t_{i_t}& \leq O(T^{3/4}/\sqrt{\delta}) \;\;\text{and}\;\; \left[\sum_{t=1}^T(c_0-\p_t^{\top}\c)\right]_+\leq O(\sqrt{\delta}T).
\end{align*}
\end{theorem}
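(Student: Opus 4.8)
The plan is to follow the expectation-level BanditLEWA proof step for step, but to replace every instance of ``in expectation'' by ``with high probability,'' using the Exp3.P-style corrections that are already hard-wired into the weight update and initialization of High Probability BanditLEWA. First I would reestablish the improved primal inequality~\eqref{eqn:primal2}, this time for the \emph{biased} score vector $\R_t$ whose $i$th coordinate is $\bigl(\widehat r^t_i + \tfrac{\alpha}{p^t_i\sqrt{KT}}\bigr) + \lambda_t\bigl(\widetilde c^t_i + \tfrac{2K}{\gamma}\tfrac{\alpha_1}{\sqrt t}\bigr)$. The log-ratio argument behind~\eqref{eqn:imprimal} is insensitive to what $\R_t$ is, so it reproduces verbatim; the only new ingredient is that the biased initialization $\w_1 = \exp(\eta\alpha\sqrt{KT})\boldsymbol{1}$ contributes an offset $-\eta\alpha\sqrt{KT}$ to $\ln(W_{T+1}/W_1)$ that is designed to cancel the comparator's share of the reward exploration bonus. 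This yields a \emph{pathwise} bound relating $\p^{\top}\sum_t\R_t$ and $\sum_t\q_t^{\top}\R_t$ of the familiar shape $\tfrac{\ln K}{\eta}+O(\eta KT)+O(\eta K)\sum_t\lambda_t^2$, valid on every sample path.

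The crux is the concentration step, where two coupled sources of randomness must be tamed at once: the action draws $i_t\sim\p_t$, under which $\widehat\r_t,\widehat\c_t$ are unbiased but $1/p^t_i$-variance estimates of $\r_t,\c_t$, and the constraint noise $\c_t$ about its mean $\c$. For the rewards I would invoke the Exp3.P martingale bound of~\cite{DBLP:journals/ml/AuerCF02}: the per-arm, per-round bonus $\tfrac{\alpha}{p^t_i\sqrt{KT}}$ with $\alpha=2\sqrt{\ln(4KT/\epsilon)}$ is calibrated so that, simultaneously for all $i\in[K]$, the upper confidence inequality $\sum_t\bigl(\widehat r^t_i+\tfrac{\alpha}{p^t_i\sqrt{KT}}\bigr)\ge\sum_t r^t_i$ holds with probability at least $1-\epsilon/3$. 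For the constraints I would apply Hoeffding's inequality~\cite{DBLP:conf/ac/BoucheronLB03} to the running averages, so that $\widetilde c^t_i+\tfrac{2K}{\gamma}\tfrac{\alpha_1}{\sqrt t}\ge c_i$ and $|\p_t^{\top}\overline\c_t-\p_t^{\top}\c|\le\alpha_t$ each fail with only the residual probability budget. A union bound over the $O(KT)$ such events --- exactly what the $\ln(KT/\epsilon)$ inside $\alpha,\alpha_1,\alpha_t$ pays for --- then makes all confidence inequalities hold jointly with probability $1-\epsilon$.

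Granting this high-probability event, I would combine the primal inequality with the dual inequality of Lemma~\ref{lemma:dual}, instantiated with $\beta_t=\q_t^{\top}\widehat\c_t+\alpha_t$ as dictated by the $\lambda$-update, precisely as in the proof of the expectation-level bandit theorem: add the two bounds, use the a~priori bound $\lambda_t\le c_0/\delta$ from Lemma~\ref{lemma:dual} together with the confidence inequalities to replace the biased and importance-weighted quantities by their true counterparts $\r_t$ and $\c$, invoke $(1-\gamma)\q_t\le\p_t$ to pass from the auxiliary distribution $\q_t$ to the played distribution $\p_t$, and choose the step sizes so that the coefficient $2\eta K-(1-\gamma)\tfrac{\delta}{2}\le 0$ annihilates the $\sum_t\lambda_t^2$ term. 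Maximizing the resulting inequality over $\lambda>0$ recovers the quadratic-in-violation structure of~\eqref{eqn:str}; substituting $\gamma=O(T^{-1/2})$, $\eta=\tfrac{\gamma}{\beta K}\tfrac{\delta}{\delta+1}$, $\beta=\max\{3,1+2\alpha_1\}$ and reading off the regret and violation terms as in~\eqref{eqn:reg}--\eqref{eqn:vc} produces the claimed $\text{Regret}_T\le O(T^{3/4}/\sqrt\delta)$ and $\text{Violation}_T\le O(\sqrt\delta\,T)$, with the $\sqrt{\ln(KT/\epsilon)}$ factors absorbed into the hidden constants and $\delta$ left as a tunable regret--violation trade-off.

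I expect the reward-concentration step to be the main obstacle. Because the importance-weighted estimators have variance of order $1/p^t_i$, a direct Hoeffding bound is useless; the Exp3.P remedy of a per-arm exploration bonus plus biased initial weights must be reproduced with care, verifying that the bonus is simultaneously (i) large enough that the one-sided deviation $\sum_t(r^t_i-\widehat r^t_i)$ stays below it for all arms with the stipulated probability, and (ii) small enough --- after $p^t_i\ge\gamma/K$ gives $\sum_t\tfrac{\alpha}{p^t_i\sqrt{KT}}=\widetilde O(\sqrt{KT}/\gamma)$ --- that it does not overwhelm the target regret. The analogous control of the $\lambda_t$-weighted constraint deviations is subtler still: since $\lambda_t$ multiplies the high-variance $\widehat\c_t$, the corresponding martingale increments are bounded only through the a~priori ceiling $\lambda_t\le c_0/\delta$, and it is this $1/\delta$ dependence that ultimately injects the $1/\sqrt\delta$ and $\sqrt\delta$ factors into the final regret and violation bounds.
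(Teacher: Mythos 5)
Your proposal is correct and follows essentially the same route as the paper's own proof in Appendix~\ref{app-proof-thm7}: Exp3.P-style per-arm confidence bonuses with the biased initialization $\w_1=\exp(\eta\alpha\sqrt{KT})\boldsymbol{1}$ for the rewards, Hoeffding-based confidence terms for the running constraint averages, and a union bound over arms and rounds, all feeding into the same primal-dual combination (log-ratio primal inequality, online-gradient-descent dual inequality with $\beta_t=\q_t^{\top}\widehat\c_t+\alpha_t$, annihilation of the $\sum_{t}\lambda_t^2$ term via the choice $\eta=\frac{\gamma}{\beta K}\frac{\delta}{\delta+1}$, and maximization over $\lambda$ to recover the structure of~(\ref{eqn:str})). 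You also correctly identify the mechanism the paper relies on --- comparing against the maximizing arm's upper-confidence score (the paper's $\widehat U_T$) and the a~priori bound $\lambda_t\leq c_0/\delta$ --- as the source of the $1/\sqrt{\delta}$ and $\sqrt{\delta}$ factors in the final bounds.
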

The proof  is deferred to~\ref{app-proof-thm7}. From this theorem, when $\delta =O(T^{-1/4})$, the regret and the  violation bounds are $O(T^{7/8})$ and $O(T^{7/8})$, respectively.

\section{Conclusions and Future Works}
In this paper we proposed an efficient algorithm for regret minimization under stochastic constraints. The proposed algorithm, namely LEWA,  is a primal-dual variant of the exponentially weighted average algorithm and relies on the theory of Lagrangian theory in constrained optimization. We establish expected and high probability bounds on the regret and the long term violation of the constraint in full information and bandit settings using novel theoretical analysis. In particular, in full information setting, LEWA algorithms attains optimal $\tilde{O}(\sqrt{T})$ regret bound and $O(T^{3/4})$ bound on the violation of the constraints in expectation, and with a simple trick in high probability. 

The present work leaves open a number of interesting directions for future work. In particular, extending the framework to handle multi-criteria online decision making is left to future work. Turning the proposed algorithm to the one which exactly satisfies the constraint in the long run is also an interesting problem. Finally, it would be interesting to see if it is possible to improve the bound obtained for the violation of the constraint.  


\section{References}
\bibliographystyle{elsarticle-num}
\bibliography{lewa}

\appendix

\section{Variation Bound for Violation of the Constraint}
\label{variation-proof}
Previously, when deriving the bound for the violation of the constraint, we simply bound the regret  as $\sum_{t=1}^T(\p^{\top}\r_t-\p_t^{\top}\r_t)\geq -T$. Since this simple lower bounding seems to be weak in general,  we present variation based bound for the violation of the constraint which results in  significantly improved bounds when  the variation of the consecutive reward vectors is small. For example, when the rewards vectors are correlated, the variation will be smaller than $T$.  We note that bounding the regret in terms of the variation of the reward vectors has been investigated in few recent  works~\cite{DBLP:conf/colt/HazanK08,gradual-var} and online learning algorithms with improved regret bound have been developed. To this end, let $\widehat\r_T=\frac{1}{T}\sum_{t=1}^T\r_t$ denote the mean of reward vectors $\r_t,t=1,\cdots, T$, and define the variation in the reward vectors as 
\begin{align*}
\text{ Variation}_T = \sum_{t=1}^T \|\r_t-\widehat \r_T\|_{\infty}.
\end{align*}
Then we have
\begin{align*}
\sum_{t=1}^T (\p_t^{\top}\r_t-\p^{\top}\r_t) &= \sum_{t=1}^T \p_t^{\top}(\r_t-\widehat\r_T) + (\p_t^{\top}\widehat\r_T - \p^{\top}\widehat\r_T) + \p^{\top}(\widehat\r_T-\r_t)\\
&\leq 2\text{ Variation}_T + \sum_{t=1}^T\p_t^{\top}\widehat\r_T  - \p^{\top}\widehat\r_T \\
&\leq 2\text{ Variation}_T + T(\widehat\p_T^{\top} \widehat\r_T - \p^{\top}\widehat \r_T),
\end{align*}
where $\widehat\p_T=\frac{1}{T}\sum_{t=1}^T\p_t$. The following lemma bounds the second term in above inequality. 

\begin{lemma}
\label{lemma:8}
Let $\p=\arg\max_{\x\in\Delta, \x^{\top}\c\geq c_0}\x^{\top}\widehat\r_T$,  then 
\begin{align*}
\widehat\p_T^{\top}\widehat\r_T - \p^{\top}\widehat\r_T \leq \frac{C}{T}\left[\sum_{t=1}^T(c_0- \p_t^{\top}\c)\right]_+
\end{align*}
where $C$ is some constant and $\Delta  =\{\alpha \in \R_+^K: \sum_{i=1}^K \alpha_i = 1 \}$ is the simplex. 
\end{lemma}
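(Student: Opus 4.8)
The plan is to reduce the claim to a perturbation (sensitivity) argument and to split into two cases according to whether the averaged decision $\widehat\p_T$ is itself feasible for the constrained problem defining $\p$. First I would record the two elementary facts that drive everything: since $\widehat\p_T=\frac1T\sum_{t=1}^T\p_t$ is a convex combination of points of $\Delta$, it also lies in $\Delta$; and $\widehat\p_T^{\top}\c=\frac1T\sum_{t=1}^T\p_t^{\top}\c$, so the averaged violation is exactly $c_0-\widehat\p_T^{\top}\c=\frac1T\sum_{t=1}^T(c_0-\p_t^{\top}\c)$. In the easy case $\widehat\p_T^{\top}\c\geq c_0$, the point $\widehat\p_T$ is feasible, so by optimality of $\p$ we have $\p^{\top}\widehat\r_T\geq\widehat\p_T^{\top}\widehat\r_T$; the left-hand side is then nonpositive while the right-hand side is nonnegative, and the lemma holds trivially.

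The substantive case is $\widehat\p_T^{\top}\c<c_0$. Here I would introduce a strictly feasible (Slater) point $\q\in\Delta$ with $\q^{\top}\c\geq c_0+\rho$ for a fixed $\rho>0$ (for instance the vertex concentrating all mass on $\arg\max_i c_i$, giving $\rho=\max_i c_i-c_0$ whenever the constrained problem is strictly feasible), and form the mixture $\widehat\p_T'=(1-\theta)\widehat\p_T+\theta\q$. Writing $v=c_0-\widehat\p_T^{\top}\c>0$, a one-line computation shows $(\widehat\p_T')^{\top}\c\geq c_0$ as soon as $\theta\geq v/(v+\rho)$, so I would take $\theta=v/(v+\rho)\leq v/\rho$. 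This renders $\widehat\p_T'$ feasible, hence $\p^{\top}\widehat\r_T\geq(\widehat\p_T')^{\top}\widehat\r_T$ by optimality of $\p$.

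Substituting this optimality inequality and expanding the mixture then gives
\begin{align*}
\widehat\p_T^{\top}\widehat\r_T-\p^{\top}\widehat\r_T\leq\widehat\p_T^{\top}\widehat\r_T-(\widehat\p_T')^{\top}\widehat\r_T=\theta\,(\widehat\p_T-\q)^{\top}\widehat\r_T\leq\theta,
\end{align*}
where the final step uses that $\widehat\r_T\in[0,1]^K$ (it is an average of vectors $\r_t\in[0,1]^K$) and $\widehat\p_T,\q\in\Delta$, so both $\widehat\p_T^{\top}\widehat\r_T$ and $\q^{\top}\widehat\r_T$ lie in $[0,1]$ and their difference is at most $1$. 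I would close the argument by bounding $\theta\leq v/\rho=\frac{1}{\rho T}\sum_{t=1}^T(c_0-\p_t^{\top}\c)\leq\frac{1}{\rho T}\left[\sum_{t=1}^T(c_0-\p_t^{\top}\c)\right]_+$, which yields the claim with $C=1/\rho$.

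The main obstacle I anticipate is not the algebra but identifying the constant $C$ and making transparent that it is governed by a strict-feasibility margin: the construction degenerates as $\rho\to0$, i.e.\ as the feasible region $\{\x\in\Delta:\x^{\top}\c\geq c_0\}$ collapses onto the boundary $\max_i c_i=c_0$. Consequently the lemma should be read as implicitly assuming $\max_i c_i>c_0$, and the cleanest explicit choice is $C=1/(\max_i c_i-c_0)$; everything else is routine bookkeeping.
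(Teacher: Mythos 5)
Your proof is correct, but it takes a genuinely different route from the paper. The paper argues via sensitivity analysis of the underlying linear program: it defines the value function $h(\gamma)=\max_{\x\in\Delta,\; c_0-\x^{\top}\c\leq\gamma}\x^{\top}\widehat\r_T$, shows through the Lagrangian dual that $h$ is concave, and then bounds $\widehat\p_T^{\top}\widehat\r_T-\p^{\top}\widehat\r_T\leq h(c_0-\widehat\p_T^{\top}\c)-h(0)\leq h'(0)\,(c_0-\widehat\p_T^{\top}\c)$, so that the constant is $C=h'(0)$, an optimal dual variable whose existence the paper must establish separately (its Lemma~\ref{lemma:41}, requiring strict feasibility, i.e.\ $c_0<\max_k c_k$). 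You instead give a constructive perturbation argument: mix the infeasible average $\widehat\p_T$ with a Slater point $\q$ (the vertex at $\arg\max_i c_i$) just enough to restore feasibility, then compare against $\p$ by optimality; the mixing weight $\theta\leq v/\rho$ directly converts the average violation $v$ into a regret-type bound. The two proofs rest on exactly the same hypothesis — your margin $\rho=\max_i c_i-c_0>0$ is precisely the paper's regularity condition — but yours is more elementary and self-contained, and it buys an explicit constant $C=1/(\max_i c_i-c_0)$, whereas the paper's $C=h'(0)$ remains implicit (indeed, standard LP duality bounds the optimal multiplier by the objective range over the Slater margin, so $h'(0)\leq 1/\rho$ and the two constants are consistent). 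What the paper's value-function viewpoint buys in exchange is generality: concavity of $h$ and the bound $h(\gamma)-h(0)\leq h'(0)\gamma$ survive with multiple or non-linear constraints, where an explicit single-direction mixing construction would be harder to write down.
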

\begin{proof}
Let $h(\gamma)$ denote
\begin{align*}
h(\gamma) = \max_{\x\in\Delta} \x^{\top}\widehat\r_T, \quad \text{s.t. }  c_0-\x^{\top}\c\leq \gamma. 
\end{align*}
We assume $c_0-\widehat\p_T^{\top}\c\geq 0$, otherwise the bound is trivial. 
Then 
\begin{align*}
\widehat\p_T^{\top}\widehat\r_T - \p^{\top}\widehat\r_T\leq h(c_0- \widehat\x_T^{\top}\c)- h(0).
\end{align*}
Introducing Lagrangian multiplier, 
\begin{align*}
h(\gamma) &= \min_{\mu\geq 0} \max_{\x\in\Delta}\x^{\top}\widehat\r_T + \mu(\gamma- c_0 + \x^{\top}\c) \\ &=\min_{\mu\geq 0} \max_{\x\in\Delta} \x^{\top}(\widehat\r_T+\mu \c) -\mu c_0 + \mu\gamma \\
& =\min_{\mu\geq 0} g(\mu) + \mu\gamma.
\end{align*}
Since $g(\mu)+ \mu\gamma$ is concave in $\gamma$, therefore $h(\gamma)$ is also a concave function in $\gamma$. Then we have
\begin{align*}
\widehat\p_T^{\top}\widehat\r_T - \p^{\top}\widehat\r_T & \leq h(c_0- \widehat\p_T^{\top}\c)- h(0)\\ 
& \leq h'(0) (c_0- \widehat\p_T^{\top}\c) \\& \leq h'(0)\frac{1}{T}\left[\sum_{t=1}^T(c_0 -\p_t^{\top}\c)\right]_+,
\end{align*}
where the last inequality follows that fact that   $h(\gamma)$ is a monotonically increasing function, i.e.,  $h'(0)\geq 0$. 
\end{proof}
From the proof, the condition in Lemma \ref{lemma:8} holds if $h'(0)$ exists. In order to show $h'(0)$ exists, we need to  show that the linear system,
\begin{align}\label{eqn:l1}
\max_{\x} \x^{\top}\r, \text{ s.t. } c_0-\c^{\top}\x\leq 0 , \x^{\top}\mathbf 1=1, \x\geq 0
\end{align}
and its dual satisfy the regular condition. In order to represent the above linear programming problem in a  standard form, we let $\mathbf A=(-\c, \mathbf 1, -\mathbf 1)^{\top}$ and $\mathbf u=(-c_0, 1, -1)^{\top}$, and rewrite the  linear system in~(\ref{eqn:l1})  as 
\begin{align*}
\max_{\x} \quad&\x^{\top}\r\\
\text{s.t.}  \quad & \mathbf A\x\leq \mathbf u, \x\geq 0,
\end{align*}
and its dual problem as 
\begin{align*}
\min_{\mathbf y}\quad & \mathbf y^{\top}\mathbf u\\
\text{s.t.}\quad & \mathbf y\geq 0, \mathbf A^{\top}\mathbf y\geq \mathbf r.
\end{align*}
To show the system satisfy the regular condition, we need to show that 
\begin{align}
\x\succeq 0, \mathbf A\x\leq 0\Rightarrow \x^{\top}\r<0\label{eqn:c1}\\
\mathbf y\succeq 0, \mathbf A^{\top}\mathbf y\geq 0\Rightarrow \mathbf y^{\top}\mathbf u>0\label{eqn:c2}
\end{align}
where $\succeq 0$ denotes at least one element is positive, which is also termed semi-positive. To prove ~(\ref{eqn:c1}), note that there does not exists any semipositive vector $\x$  such that $\x^{\top}\mathbf 1=0$.  Therefore the primal system  satisfy the regular condition vacuously. Although the primal system does not satisfy the regular condition, the dual system still satisfy the regular condition as long as $c_0<c_{\max}$. The gradient $h'(0)$ is actually the Lagrangian variable when $\gamma=0$. The following lemma verifies the existence of $h'(0)$. 
\begin{lemma}
\label{lemma:41}
Let $\x\geq 0, \x^{\top}\mathbf 1=1, \x^{\top}\c\geq c_0$ be strictly feasible or $c_0<\max_k c_k$, then their exists bounded gradient $h'(0)$. 
\end{lemma}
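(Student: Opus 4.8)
The plan is to exploit the structure of $h$ as the optimal-value function of a linear program whose right-hand side is perturbed by $\gamma$. In the proof of Lemma~\ref{lemma:8} we already observed that $h(\gamma)=\min_{\mu\geq 0} g(\mu)+\mu\gamma$ is concave and monotonically nondecreasing in $\gamma$; the content of the present lemma is precisely that its one-sided derivative at the origin is \emph{finite}. For a concave function this amounts to showing that $h$ is finite on a two-sided neighbourhood of $\gamma=0$, for then both the left and right derivatives exist and are finite and the subdifferential $\partial h(0)$ is a bounded nonempty interval; the right derivative $h'(0)\geq 0$ used in Lemma~\ref{lemma:8} is then the desired bounded gradient.

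First I would verify finiteness of $h$ to the left of the origin. The feasible set at level $\gamma$ is $\{\x\in\Delta:\x^{\top}\c\geq c_0-\gamma\}$, which shrinks as $\gamma$ decreases and becomes empty once $c_0-\gamma>\max_k c_k$. Under either hypothesis this cannot happen for small negative $\gamma$: if the constraint is strictly feasible, or equivalently if $c_0<\max_k c_k$, then placing all mass on a coordinate $k^{\star}\in\arg\max_k c_k$ gives a point of $\Delta$ with $\x^{\top}\c=\max_k c_k>c_0$, so the feasible set retains nonempty interior and remains nonempty for every $\gamma>c_0-\max_k c_k$, an open interval containing $0$. Since $\widehat\r_T$ is bounded and $\Delta$ is compact, $h$ is finite, continuous and concave throughout this interval.

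Next I would identify the gradient with the optimal dual multiplier. By LP strong duality the minimum in $h(\gamma)=\min_{\mu\geq 0} g(\mu)+\mu\gamma$ is attained, and standard sensitivity analysis (the envelope argument already implicit in the dual representation) gives that $h'(0)$ equals the optimal multiplier $\mu^{\star}$ of the constraint $c_0-\x^{\top}\c\leq 0$. Its boundedness is exactly the conclusion of the dual regular condition~(\ref{eqn:c2}) established above: because $c_0<\max_k c_k$, the implication $\mathbf y\succeq 0,\ \mathbf A^{\top}\mathbf y\geq 0\Rightarrow \mathbf y^{\top}\mathbf u>0$ holds, which guarantees that the dual optimum is attained at a finite point and hence that $\mu^{\star}<\infty$.

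The main obstacle is ruling out $h'(0)=+\infty$, that is, an unbounded marginal gain from relaxing the constraint. This is precisely the degenerate situation in which $\gamma=0$ sits on the boundary of feasibility, so that an arbitrarily small relaxation opens up a large increase in the attainable reward. The two hypotheses, strict feasibility (Slater's condition) or $c_0<\max_k c_k$, are exactly what excludes this: they keep the problem feasible for $\gamma$ slightly below $0$, pinning the slope of the concave value function from both sides and forcing $\partial h(0)$ to be a bounded set.
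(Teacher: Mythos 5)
Your proposal is correct, but it reaches the conclusion by a genuinely different route than the paper. The paper argues through the linear program itself: it rewrites~(\ref{eqn:l1}) in standard form with $\mathbf A=(-\c,\mathbf 1,-\mathbf 1)^{\top}$ and $\mathbf u=(-c_0,1,-1)^{\top}$, invokes the primal/dual regularity (semipositivity) conditions~(\ref{eqn:c1})--(\ref{eqn:c2}), observes that the primal condition holds vacuously, asserts that the dual condition holds whenever $c_0<\max_k c_k$, and then identifies $h'(0)$ with the optimal Lagrange multiplier at $\gamma=0$, whose boundedness is read off from the dual regularity; no separate proof is given after the lemma statement. Your first argument bypasses this machinery entirely: you show that under either hypothesis the feasible set $\{\x\in\Delta:\x^{\top}\c\geq c_0-\gamma\}$ remains nonempty for every $\gamma>c_0-\max_k c_k$, an open interval containing $0$, so the concave, nondecreasing value function $h$ is finite on a two-sided neighbourhood of the origin; a concave function finite near an interior point of its domain has a nonempty bounded superdifferential there, so $h'(0^+)$ exists, is finite and nonnegative, which is exactly what Lemma~\ref{lemma:8} uses. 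This is more elementary and, in fact, more watertight: the dual condition~(\ref{eqn:c2}) as literally stated fails in the direction $\mathbf y=(0,1,1)^{\top}$ (an artifact of splitting the equality $\x^{\top}\mathbf 1=1$ into two inequalities), since then $\mathbf A^{\top}\mathbf y=0$ and $\mathbf y^{\top}\mathbf u=0$ rather than $\mathbf y^{\top}\mathbf u>0$, so the paper's regularity route needs the same kind of repair the paper itself applies on the primal side. Your second paragraph, which re-derives boundedness of $h'(0)$ from~(\ref{eqn:c2}) via the dual multiplier, inherits that fragility, but it is redundant given your neighbourhood argument. What the paper's route buys is the explicit interpretation of $h'(0)$ as the optimal dual variable, which makes the constant $C$ in Lemma~\ref{lemma:8} concrete; what your route buys is a short, self-contained existence proof that needs nothing beyond convex analysis of the value function.
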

Following Lemma \ref{lemma:8}, we have
\begin{align*}
\sum_{t=1}^T (\p_t^{\top}\r_t-\p^{\top}\r_t) \leq 2\text{ Variation}_T +  C\left[\sum_{t=1}^T(c_0- \p_t^{\top}\c)\right]_+
\end{align*}
 Then we have
 \begin{align*}
 \left[\sum_{t=1}^T(c_0- \p_t^{\top}\c)\right]_+^2\leq O\left(\sqrt{T}\right)\left(2\text{ Variation}_T+C\left[\sum_{t=1}^T(c_0- \p_t^{\top}\c)\right]_+ + O\left(\sqrt{T}\right)\right) +  O(\sqrt{T})
 \end{align*}
 Then we get 
 \begin{align*}
 \left[\sum_{t=1}^T(c_0- \p_t^{\top}\c)\right]_+\leq  O(\sqrt{T})+ O(T^{1/4}\sqrt{\text{ Variation}_T})
 \end{align*}

\section{Proof of Theorem~\ref{thm:high:bandit:lewa}}
\label{app-proof-thm7}
Similar to the analysis for Exp3.P algorithm in~\cite{DBLP:journals/ml/AuerCF02}, we have have the following two upper confidence bounds, 
\begin{align}
\sum_{t=1}^T\widehat r_i^t + \alpha \sigma_i^t\geq \sum_{t=1}^T r_i^t, \forall i\\
\sum_{t=1}^T\left(\widetilde c_i^t + \frac{2K}{\gamma}\frac{\alpha_1}{\sqrt{t}}\right)\geq \sum_{t=1}^T \overline c_i^t, \forall i
\end{align}
where $\sigma^t_i= \sqrt{KT} + \frac{1}{KT}\sum_{s=1}^t1/p^s_i$.  Following the same line of proof as  in~\cite{DBLP:journals/ml/AuerCF02},  we have
\begin{align*}
\sum_{t=1}^T\ln \frac{W_{t+1}}{W_t}&\leq \eta\sum_{t=1}\q_t^{\top}(\widehat \r_t+ \lambda_t\widetilde \c_t) + \frac{\alpha\eta}{1-\gamma}\sqrt{kT}+ \frac{4\alpha_1\eta K}{\gamma\delta}\sqrt{T} \\
&\hspace{0.5cm}+ \frac{4\eta^2}{1-\gamma}\sum_{t=1}^T \sum_{i}\widehat r^t_i + \frac{4\eta^2}{1-\gamma}\sum_{t=1}^T\lambda_t^2\widetilde\c_t^{\top}\mathbf 1 + \frac{4\alpha^2 \eta^2}{\gamma(1-\gamma)} + \frac{16\alpha_1^2\eta^2K^2}{\gamma^2\delta^2}(1+\ln(T))
\end{align*}
and
\begin{align*}
\sum_{t=1}^T\ln \frac{W_{t+1}}{W_t}&\geq\eta\p^{\top}\sum_{t=1}^T(\r_t + \lambda_t\overline\c_t ) -\ln K \\
&\hspace{0.5cm}+\eta\left( \sum_{t=1}^T(\widehat r^t_i + \alpha\sigma^t_i + \lambda_t\widetilde c^t_i + \lambda_t\frac{2K}{\gamma}\alpha_t) - \p^{\top}\sum_{t=1}^T(\r_t +  \lambda_t\overline\c_t) \right). 
\end{align*}
Then we have
\begin{align*}
&\p^{\top}\sum_{t=1}^T(\r_t + \lambda_t\overline\c_t ) +\left( \sum_{t=1}^T(\widehat r^t_i + \alpha\sigma^t_i + \lambda_t\widetilde c^t_i + \lambda_t\frac{2k}{\gamma}\alpha_t) - \p^{\top}\sum_{t=1}^T( \r_t + \lambda_t\overline\c_t ) \right) - \q_t^{\top}(\widehat \r_t+ \lambda_t\widetilde \c_t) \\
&\leq  \frac{\alpha}{1-\gamma}\sqrt{KT}+\frac{4\alpha_1K}{\gamma\delta}\sqrt{T} + \frac{4\eta}{1-\gamma}\sum_{t=1}^T \sum_{i}\widehat r^t_i + \frac{4\eta}{1-\gamma}\sum_{t=1}^T\lambda_t^2\widetilde\c_t^{\top}\mathbf 1 \\ &\hspace{0.5cm}+\frac{4\alpha^2 \eta}{\gamma(1-\gamma)} + \frac{16\alpha_1^2\eta K}{\gamma^2\delta^2}(1+
\ln T)+ \frac{\ln K}{\eta}
\end{align*}
On the other side, let $g_t(\lambda)= \frac{\delta}{2}\lambda^2 + \lambda(\q_t^{\top}\widehat\c_t + \alpha_t -c_0)$, with probability $1-\epsilon/4$, we have
\begin{align*}
g_t(\lambda_t) - g_t(\lambda)&\leq \frac{1}{2\eta}\left(|\lambda-\lambda_t|^2-|\lambda-\lambda_{t+1}|^2\right) +  \frac{\eta}{2}|\nabla_\lambda g_t( \lambda_t)|^2\\
&\leq  \frac{1}{2\eta}\left(|\lambda-\lambda_t|^2-|\lambda-\lambda_{t+1}|^2\right)  + \eta/2(\x_t^{\top}\widehat\c_t-c_0+\alpha_t+\delta\lambda_t)^2 \\
&\leq  \frac{1}{2\eta}\left(|\lambda-\lambda_t|^2-|\lambda-\lambda_{t+1}|^2\right)  + \eta(\x_t^{\top}\widehat\c_t)^2 + \eta C\\
&\leq  \frac{1}{2\eta}\left(|\lambda-\lambda_t|^2-|\lambda-\lambda_{t+1}|^2\right)  + \eta\x_t^{\top}(\widehat\c_t)^2 + \eta C\\
&\leq  \frac{1}{2\eta}\left(|\lambda-\lambda_t|^2-|\lambda-\lambda_{t+1}|^2\right)  + \frac{\eta}{1-\gamma}\mathbf 1^{\top}\widehat\c_t + \eta C\\
&\leq  \frac{1}{2\eta}\left(|\lambda-\lambda_t|^2-|\lambda-\lambda_{t+1}|^2\right)  + \frac{\eta}{1-\gamma}(\mathbf 1^{\top}\c+ \frac{K}{\gamma}\alpha_t)+ \eta C\\
\end{align*}
where $C=(1+\alpha_1)^2$, $\displaystyle \alpha_t=\alpha_1/\sqrt{t}$.  Taking summation over $t=1,\cdots, T$ of above inequalities, we have
\begin{align*}
&\sum_{t=1}^T\frac{\delta}{2}\lambda_t^2- \lambda_t(c_0-\alpha_t-\q_t^{\top}\widehat \c_t)  + \lambda(c_0-\alpha_t-\q_t^{\top}\widehat \c_t) - \frac{\delta}{2}\lambda^2 \\
& \leq \frac{\lambda^2}{2\eta} +\sum_{t=1}^T\frac{\eta}{1-\gamma}(\mathbf 1^{\top}\c+ \frac{K}{\gamma}\alpha_t)+ \eta CT
\end{align*}
Combing the primal inequality and the dual inequality, we have
\begin{align*}
&\left( \sum_{t=1}^T(\widehat r^t_i + \alpha\sigma^t_i + \lambda_t\widetilde c^t_i + \lambda_t\frac{2K}{\gamma}\alpha_t) - \p^{\top}\sum_{t=1}^T(\r_t + \lambda_t\overline\c_t ) \right) \\
&+ \sum_{t=1}^T\p^{\top}(\r_t+\lambda_t\overline \c_t) - \q_t^{\top}\widehat\r_t - \lambda_t(c_0-\alpha_t)+ \frac{\delta}{2}\lambda_t^2 +  \lambda(c_0-\alpha_t-\q_t^{\top}\widetilde \c_t) - \frac{\delta}{2}\lambda^2\\
&\leq \frac{\lambda^2}{2\eta} +\sum_{t=1}^T\frac{\eta}{1-\gamma}(\mathbf 1^{\top}\c+ \frac{K}{\gamma}\alpha_t)+ \eta CT+   \frac{\alpha}{1-\gamma}\sqrt{KT}+\frac{4\alpha_1K}{\gamma\delta}\sqrt{T} + \frac{4\eta}{1-\gamma}\sum_{t=1}^T \sum_{i}\widehat r^t_i \\
&+ \frac{4\eta}{1-\gamma}\sum_{t=1}^T\lambda_t^2\widetilde\c_t^{\top}\mathbf 1 + \frac{4\alpha^2 \eta}{\gamma(1-\gamma)} + \frac{16\alpha_1^2\eta K}{\gamma^2\delta^2}(1+\ln T)+ \frac{\ln K}{\eta}.
\end{align*}
Then with probability $1-\epsilon$,  we have the following inequality:
\begin{align*}
&\left( \sum_{t=1}^T(\widehat r^t_i + \alpha\sigma^t_i + \lambda_t\widetilde c^t_i + \lambda_t\frac{2K}{\gamma}\alpha_t) - \p^{\top}\sum_{t=1}^T(\r_t + \lambda_t\overline\c_t) \right) \\
&+ \sum_{t=1}^T\p^{\top}(\r_t+\lambda_t\overline \c_t) - \q_t^{\top}\widehat\r_t - \lambda_t(c_0-\alpha_t) + \frac{\delta}{2}\lambda_t^2 +  \lambda(c_0-\alpha_t-\q_t^{\top}\widetilde \c_t) - \frac{\delta}{2}\lambda^2\\
&\leq \frac{\lambda^2}{2\eta} +\sum_{t=1}^T\frac{\eta}{1-\gamma}(\mathbf 1^{\top}\c+ \frac{K}{\gamma}\alpha_t)+ \eta CT+   \frac{\alpha}{1-\gamma}\sqrt{KT}+\frac{4\alpha_1 k}{\gamma\delta}\sqrt{T} + \frac{4\eta}{1-\gamma}\sum_{t=1}^T \sum_{i}\widehat r^t_i \\
&\hspace{0.5cm}+ \frac{4\eta}{1-\gamma}\sum_{t=1}^T\lambda_t^2\widetilde\c_t^{\top}\mathbf 1 + \frac{4\alpha^2 \eta}{\gamma(1-\gamma)} + \frac{16\alpha_1^2\eta K}{\gamma^2\delta^2}(1+\ln T)+ \frac{\ln K}{\eta}.
\end{align*}
Let $\widehat U_T=\max_{i}\sum_{t=1}^T (\widehat r_i^t + \alpha\sigma_i^t + \lambda_t(\widetilde c_i^t +\frac{2K}{\gamma}\alpha_t))$, $\eta= \frac{\gamma}{\beta K }\frac{\delta}{\delta+1}, \gamma\leq (\beta)/(4+\beta)$, then we have
\begin{align*}
&\left(1- \frac{4\gamma}{\beta(1-\gamma)}\right)\widehat U_T -\p^{\top}\sum_{t=1}^T(\r_t + \lambda_t\overline\c_t )\\
&+ \sum_{t=1}^T\p^{\top}(\r_t+\lambda_t\overline \c_t) - \q_t^{\top}\widehat\r_t - \lambda_t(c_0-\alpha_t) + \sum_{t=1}^T \lambda(c_0-\alpha_t-\q_t^{\top}\widetilde \c_t) - \left(\frac{\delta T}{2}+\frac{1}{2\eta} \right)\lambda^2\\
&\leq \sum_{t=1}^T \frac{\eta}{1-\gamma}(\mathbf 1^{\top}\c+ \frac{K}{\gamma}\alpha_t)+ \eta CT+   \frac{\alpha}{1-\gamma}\sqrt{KT} \\
& \hspace{0.5cm}+
\frac{4\alpha_1 K}{\gamma\delta}\sqrt{T}  + \frac{4\alpha^2 \eta}{\gamma(1-\gamma)} + \frac{16\alpha_1^2\eta K}{\gamma^2\delta^2}(1+\ln T)+ \frac{\ln K}{\eta}.
\end{align*}
Since $\widehat U_T\geq\max_{i}\sum_{t=1}^Tr_i^t +\lambda_t \overline c_i^t$,  and $\p^{\top}\sum_{t=1}^T(\r_t + \lambda_t\overline\c_t )\leq \max_{i}\sum_{t=1}^Tr_i^t +\lambda_t \overline c_i^t$, then we have with probability $1-\epsilon$, 
\begin{align*}
&\sum_{t=1}^T\p^{\top}\r_t- \q_t^{\top}\widehat\r_t - \lambda_t(c_0-\alpha_t-\p^{\top}\overline\c_t) + \sum_{t=1}^T \lambda(c_0-\alpha_t-\q_t^{\top}\widetilde \c_t) - \left(\frac{\delta T}{2}+\frac{1}{2\eta} \right)\lambda^2\\
&\leq \sum_{t=1}^T\frac{\eta}{1-\gamma}(\mathbf 1^{\top}\c+ \frac{K}{\gamma}\alpha_t)+ \eta CT+   \frac{\alpha}{1-\gamma}\sqrt{KT}+\frac{4\alpha_1 K}{\gamma\delta}\sqrt{T}  + \frac{4\alpha^2 \eta}{\gamma(1-\gamma)} \\
& \hspace{0.5cm}+ \frac{16\alpha_1^2\eta K}{\gamma^2\delta^2}(1+\ln T)+ \frac{\ln K}{\eta} +  \frac{4\gamma}{\beta(1-\gamma)}\max_i\left(\sum_{t=1}^Tr_i^t +\lambda_t \overline c_i^t\right)\\
&\leq \frac{\alpha_1\sqrt{T}}{1-\gamma} +\frac{\gamma T}{\beta(1-\gamma)}+ \frac{C\gamma T }{\beta}+   \frac{\alpha}{1-\gamma}\sqrt{KT}+\frac{4\alpha_1K}{\gamma\delta}\sqrt{T}\\
& + \frac{4\alpha^2}{\beta(1-\gamma)K} + \frac{16\alpha_1^2}{\beta\gamma\delta^2}(1+\ln T)+ \frac{\beta(K\ln K)}{\gamma}\frac{\delta+1}{\delta} + \frac{4\gamma T}{\beta(1-\gamma)}\frac{\delta+1}{\delta}.
\end{align*}
Then 
\begin{align*}
&\sum_{t=1}^T\p^{\top}\r_t- \p_t^{\top}\widehat\r_t  +\frac{\left[\sum_{t=1}^T((1-\gamma)(c_0-\alpha_t)-\p_t^{\top}\widetilde\c_t)\right]_+^2}{2(\delta T+1/\eta)} \\
& \leq \alpha_1\sqrt{T}+ \frac{C_1\gamma T }{\beta}+   \alpha\sqrt{KT}+\frac{4\alpha_1 k}{\gamma\delta}\sqrt{T}  + \frac{4\alpha^2}{\beta K}+\frac{16\alpha_1^2}{\beta\gamma\delta^2}(1+\ln T) \\
&\hspace{0.5cm}+\frac{\beta(K\ln K)}{\gamma}\frac{\delta+1}{\delta} + 4\gamma T\frac{\delta+1}{\beta\delta}. 
\end{align*}
Let $\gamma= O(T^{-1/4}), \eta= O(T^{-1/4})$, then we obtain
\begin{align*}
\max_{\p^{\top}\c\geq c_0}\p^{\top}\sum_{t=1}^T\r_t- \sum_{t=1}^Tr^t_{i_t}& \leq O(T^{3/4}/\sqrt{\delta})\;\;\;\text{and}\\
\left[\sum_{t=1}^T(c_0-\p_t^{\top}\c)\right]_+&\leq O(\sqrt{\delta}T),
\end{align*}
when $\delta =O(T^{-1/4})$, the regret bound is $O(T^{7/8})$, the worse case constraint bound is $O(T^{7/8})$.

\end{document}